\newtheorem{theorem}{Theorem}[section]
\newtheorem{Proposition}[theorem]{Proposition}
\theoremstyle{definition}
\newtheorem{example}{Example}
\newtheorem{definition}[theorem]{Definition}
\newtheorem{remark}{Remark}[section]
\newenvironment{countlist}[2][]
  {\begin{enumerate}[#1]
     \setcounter{countlist}{0}%
     \def\countname{#2}%
     \let\olditem\item
     \renewcommand{\item}{\stepcounter{countlist}\olditem}}
  {  \renewcommand{\@currentlabel}{\arabic{countlist}}%
     \label{\countname}%
   \end{enumerate}}
\newcommand{\q}[1]{``#1''}
\DeclareMathOperator{\diag}{diag}
\DeclareMathOperator{\sign}{sign}
\newcounter{countlist}
\title{A new Input Convex Neural Network with application to options pricing}
\author[1]{Vincent Lemaire}
\author[1]{Gilles Pagès}
\author[1,2]{Christian Yeo}
\affil[1]{\footnotesize Sorbonne Université, Laboratoire de Probabilités, Statistique et Modélisation, UMR 8001, 75005 Paris, France}
\affil[2]{\footnotesize Engie Global Markets, 1 place Samuel Champlain, 92400 Courbevoie, France}
\date{}
\numberwithin{equation}{section}
\pgfplotsset{compat=1.18}
\begin{document}
\maketitle

\begin{abstract}
We introduce a new class of neural networks designed to be convex functions of their inputs, leveraging the principle that any convex function can be represented as the supremum of the affine functions it dominates. These neural networks, inherently convex with respect to their inputs, are particularly well-suited for approximating the prices of options with convex payoffs. We detail the architecture of this, and establish theoretical convergence bounds that validate its approximation capabilities. We also introduce a \emph{scrambling} phase to improve the training of these networks. Finally, we demonstrate numerically the effectiveness of these networks in estimating prices for three types of options with convex payoffs: Basket, Bermudan, and Swing options.
\end{abstract}

\textit{\textbf{Keywords} - Input Convex neural network, Options pricing, Convex functions approximation, Basket options, Optimal stopping, Stochastic optimal control.}

\section*{Introduction}
Neural networks have become powerful tools for function and representation approximation. In many fields, such as finance, the target function to approximate is often convex, and preserving this convexity is essential, as failing to do so can lead to financial losses because of arbitrage opportunities. In this paper, we introduce a novel class of convex networks: neural networks specifically structured to yield convex functions of their inputs. It is worth noticing that, to this end, some approaches have already been developed. Among others, one may find \emph{Input Convex Neural Networks} (\emph{ICNN}) \cite{pmlr-v70-amos17b} and \emph{GroupMax} networks \cite{MR4791896}.

\emph{Input Convex Neural Networks} are neural networks where activation functions are convex, and non-decreasing functions. This design ensures the convexity of the output of the neural network with respect to the inputs as long as weights of the neural network are non-negative. Indeed, it is straightforward that the composition of a non-decreasing convex function with a convex function (actually affine function, representing layers' outputs) is still convex. In this method, the positiveness constraint on weights of the network can be met using penalization or using \q{\emph{passthrough}} layers as suggested in \cite{pmlr-v70-amos17b}. Besides, the representational power of \emph{ICNNs} is discussed in \cite{pmlr-v70-amos17b}, where, to strengthen the representational capacity of \emph{ICNNs}, it also proposed the use of \emph{Partially Input Convex Neural Networks (PICNN)} which outputs are convex functions only with respect to some of their inputs.

\emph{GroupMax} networks builds upon a well-known characterization of convex functions: It can be shown that a function $f : \mathbb{R}^d \to \mathbb{R}$ is convex if and only if it is the supremum of the affine functions it dominates. Thus, the authors designed a neural network architecture that progressively applies the maximum operation to groups of outputs across hidden layers. To ensure that the composition of maximum functions preserves convexity, they enforced non-negativity of weights using the function $\mathbb{R} \ni x \mapsto \max(x, 0)$. Furthermore, the authors established a \emph{universal approximation theorem}, demonstrating that their network can approximate any convex function on a compact domain to arbitrary precision.

In this paper, we propose a new \emph{Convex Network} (\emph{CN}), allowing to get rid of the positiveness constraint of weights aforementioned (be that for \emph{ICNN} or \emph{GroupMax} network), while still producing a convex function. Our architecture has the advantage to be very simple, yet effective. Besides, to strengthen the training phase of our network, we build upon the \emph{Deep Linear Network} literature \cite{JMLR:v25:23-0493, arora2018a, BahBubacarrLinearNetRiemman}.

\emph{Linear Neural Networks} are networks whose units have no activation functions. Thus, by construction, and as suggested by their names, they are designed to produce affine functions or, more equivalently, hyperplanes. In their simplest form, they are just one-layer neural networks without activation function. Their deep version obtained by stacking hidden layers without activation functions proved to have good optimization properties. In \cite{JMLR:v25:23-0493} (see also \cite{NIPS2016_f2fc9902}), the authors studied the loss landscapes in \emph{Deep Linear Neural Networks} and highlighted challenges like escaping from saddle points. They have showed that every critical point of these landscapes that is not a global minimum is a saddle point, meaning stochastic gradient-based methods can theoretically escape them (see \cite{Jin2017HowTE, ge2018learning}). Other works, such as \cite{arora2018a}, demonstrate that stochastic gradient descents in deep linear networks converge to a global minimum at a linear rate. Similarly, \cite{pmlr-v97-du19a} shows that with high probability, gradient descent with \emph{Xavier initialization} \cite{pmlr-v9-glorot10a} reaches a global minimum in sufficiently large deep linear networks.

Our network builds on the characterization of convex functions as the supremum of the affine functions they dominate. In this characterization, affine functions may be thought as layers' outputs, and the maximum function can be thought as the activation function of the output layer. Besides, the intuition behind the strength of our network is not only based on the characterization of convex functions aforementioned, but also on the fact that, from the \emph{Deep Linear Neural Network} literature, we know that stacking layers without activation functions has good optimization properties while producing affine functions (or hyperplanes). Thus, in our \emph{Convex Network}, the affine part can be replaced by a stack of hidden layers without activation functions, the maximum function serving as the activation function of the output layer. This yields what we will call \emph{Scrambling} and is closely related to \emph{attention mechanism} \cite{NIPS2017_3f5ee243}. Indeed, this idea can be compared to the first linear transformation in the attention mechanism, where input vectors are transformed into queries, keys, and values through linear layers.

Convex functions may have useful properties, such as having a single global minimum (when coercive), which makes our \emph{convex network} particularly suitable for applications where convexity is important, such as optimization problems, machine learning, and pricing models. The latter field is the main focus of this paper. We will demonstrate that our \emph{Convex Network} is an efficient alternative when it comes to the pricing of options or other financial instruments with convex payoffs.

Options pricing is a fundamental problem in Financial Mathematics, where the goal is to determine the fair value of financial derivatives based on their underlying assets. Many of these financial derivatives involve instruments with convex payoffs. Among others, one can cite options like basket \cite{DINGEC2013421}, Best-of-Bermudan \cite{Longstaff2001ValuingAO}, and Swing (or Take-or-Pay) options  \cite{Thompson1995ValuationOP}. The pricing of last two options relies on respectively \emph{Optimal Stopping Theory} and \emph{Stochastic Optimal Control theory}, whereas the pricing of the first option relies on classic Monte Carlo approach.

In this paper, we propose the use of our \emph{Convex Network} for the pricing of such options, a novel approach that leverages the inherent convexity of our \emph{Convex Network} architecture. Unlike traditional neural networks, which may struggle to maintain convexity properties (sometimes for non-arbitrability purposes) in pricing models, our \emph{Convex Network} guarantees that this essential condition is automatically met by design. Besides, as mentioned earlier, in this paper, we will tackle three types of options (Basket, Best-of-Bermudan, and Swing) which have convex payoffs. Especially, we consider these options since one can show that, under mild assumptions, their price is a convex function of the underlying asset price. While this is straightforward for basket options, it is more difficult for path-dependent options like Bermudan, and Swing options. In this tougher case, we will rely on \emph{convex ordering theory} \cite{Pagès2016_cvx_ord_path_dep, pagès2024convexorderingstochasticcontrol}. Then, our aim is to use our \emph{Convex Network} to approximate prices of such options as a function of the underlying asset value. For Bermudan, and Swing options, this will be based on the \emph{Backward Dynamic Programming Principle}, where the aim will be to approximate the \emph{continuation value} which is a conditional expectation. It is worth noting that approximating conditional expectations to solve such kind of problems has already been considered with classic feedforward neural networks (see \cite{MR4308650}).

The contributions of this paper are threefold. First, we establish theoretical convergence bounds, based on optimal quantization results (see \cite{bookGillesQuantifMarginale, Pagès2018Quantif}), for the approximation of convex functions using our \emph{Convex Network}, providing a solid foundation for their application in option pricing. Second, we implement our \emph{Convex Network} for each of the aforementioned option types and demonstrate its effectiveness through several numerical experiments. Third, we compare the performances of our \emph{Convex Network} against other state-of-the-art methods, highlighting its advantages in terms of accuracy.

Our paper is organized as follows. Section \ref{section_1} presents theoretical foundations of our \emph{Convex Network} as well as its approximation capacity, providing some error bounds. We also seize the opportunity to discuss different possible architectures for our \emph{Convex Network}. In Section \ref{section_2}, we prove the convexity of the prices of the three types of options aforementioned with respect to their underlying asset price. Section \ref{section_3} presents numerical results, attesting the effectiveness of our \emph{Convex Network}.

\vspace{0.4cm}
\noindent
\textbf{\textsl{Notations}.}
$\bullet$ $\mathbb{R}^d$ is equipped with the canonical Euclidean norm denoted by $|\cdot|$. For $x \in \mathbb{R}^d$ and $R > 0$, the closed ball of radius $R$ centered around $x$ is defined by:
$$\mathcal{B}(x, R) := \big\{y \in \mathbb{R}^d: \quad |x-y| \le R \big\}.$$

\noindent
$\bullet$ For all $x=(x_1, \ldots, x_d), y=(y_1,\ldots,y_d) \in \mathbb{R}^d$, $\langle x, y \rangle = \sum_{i = 1}^d x_iy_i$ denotes the Euclidean inner product. 

\noindent
$\bullet$ $\mathbb{M}_{d,q}(\mathbb{R})$ denotes the space of real-valued matrix with $d$ rows and $q$ columns and is equipped with the Fröbenius norm.

\noindent
$\bullet$ $\mathcal{S}^{+}\big(q, \mathbb{R}\big)$ and $\mathcal{O}\big(q, \mathbb{R}\big)$ denote respectively the subsets of $\mathbb{M}_{q, q}(\mathbb{R})$ of symmetric positive semi-definite and orthogonal matrices.

\noindent
$\bullet$ For $A \in \mathbb{M}_{d,d}(\mathbb{R})$, $\det(A)$ denotes the determinant of the matrix $A$.

\noindent
$\bullet$ For a sequence $(I_i)_{1\le i\le n}$ of sets, $\prod_{i=1}^n I_i$ denotes the cartesian product.

\noindent
$\bullet$ For a matrix $A \in \mathbb{M}_{d,q}(\mathbb{R})$, $A^\top$ denotes its transpose.

\noindent
$\bullet$ For $x := (x_1,\ldots,x_d) \in \mathbb{R}^d$, $\diag(x_1,\ldots,x_d)$ denotes the diagonal matrix in $\mathbb{M}_{d,d}(\mathbb{R})$ whose diagonal entries are made of component of vector $x$.

\section{Convex Neural Networks}
\label{section_1}
In this paper, we propose a new neural network architecture designed to produce convex function by nature. The rationale behind the construction of our \emph{convex network} lies in Proposition \ref{charac_convex_func_max_affine}, defining convex functions by the upper envelope of affine functions they dominate.

\begin{Proposition}[A characterization of convex functions]
\label{charac_convex_func_max_affine}
    \begin{itemize}
        \item Let $d \in \mathbb{N}^{*}$. A function $f:\mathbb{R}^d \to \mathbb{R}$ is convex if and only if for any $x \in \mathbb{R}^d $, one has:
        $$f(x) = \sup\Big\{\langle \alpha, x\rangle + \beta, \quad \alpha \in \mathbb{R}^d, \beta \in \mathbb{R} \quad \text{s.t.} \quad \forall y \in \mathbb{R}^d, \quad \langle \alpha, y\rangle + \beta \le f(y) \Big\}.$$

        \item In particular, for any sequence $(x_i)_{i \ge 1}$ everywhere dense in $\mathbb{R}^d$, one has for any $x \in \mathbb{R}^d$:
        $$\underset{i=1:n}{\max} \hspace{0.1cm} \varphi_i(x) := \underset{i=1:n}{\max} \hspace{0.1cm} \Big[\langle \nabla f(x_i), x-x_i\rangle + f(x_i)  \Big] \xrightarrow[n \to +\infty]{} f(x),$$

        \noindent
        where $\nabla f$ denotes the subgradient of the convex function $f$.
    \end{itemize}
\end{Proposition}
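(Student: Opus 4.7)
The plan is to prove both bullet points by leveraging the existence and local boundedness of the subgradient of a convex function that is finite on all of $\mathbb{R}^d$.

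For the first bullet point, one inclusion is immediate: the supremum of any family of affine functions is convex (as a supremum of convex functions), which takes care of the \q{if} direction. For the converse, I would note that any convex $f:\mathbb{R}^d \to \mathbb{R}$ that is finite everywhere is automatically continuous, and admits a nonempty subgradient $\partial f(x)$ at every point $x \in \mathbb{R}^d$. Fixing $x$, pick $\alpha \in \partial f(x)$ and set $\beta := f(x) - \langle \alpha, x \rangle$. The subgradient inequality $f(y) \ge f(x) + \langle \alpha, y-x\rangle$ for all $y$ rewrites exactly as $\langle \alpha, y\rangle + \beta \le f(y)$, so $(\alpha, \beta)$ is admissible in the supremum. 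Evaluating at $y=x$ gives $\langle \alpha, x\rangle + \beta = f(x)$, showing the supremum is attained and equals $f(x)$. The trivial inequality $\sup \le f(x)$ (built into the admissibility constraint) closes the equality.

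For the second bullet point, each $\varphi_i$ is affine and, by the subgradient inequality applied at $x_i$, satisfies $\varphi_i(y) \le f(y)$ for all $y$, in particular $\varphi_i(x) \le f(x)$. Hence $M_n(x) := \max_{i=1:n} \varphi_i(x)$ is non-decreasing in $n$ and bounded above by $f(x)$, so it suffices to exhibit a subsequence converging to $f(x)$. Here I would use two facts: (a) a convex function finite on $\mathbb{R}^d$ is locally Lipschitz, and (b) its subgradient is locally bounded, so on any compact neighborhood $V$ of $x$ there is $K>0$ with $|\nabla f(z)| \le K$ for every $z \in V$ and every selection of subgradient. Given $\ep >0$, continuity of $f$ at $x$ provides $\delta \in (0, \text{dist}(x, V^c))$ such that $|f(z)-f(x)| \le \ep/2$ whenever $|z-x| \le \delta$, and shrinking $\delta$ if needed so that $K\delta \le \ep/2$. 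By density of $(x_i)_{i\ge 1}$, there exists an index $i_0$ with $|x_{i_0}-x| \le \delta$; then for any $n \ge i_0$,
$$M_n(x) \ge \varphi_{i_0}(x) = f(x_{i_0}) + \langle \nabla f(x_{i_0}), x - x_{i_0}\rangle \ge f(x) - \tfrac{\ep}{2} - K\delta \ge f(x) - \ep.$$
Combined with $M_n(x) \le f(x)$, this yields $M_n(x) \to f(x)$.

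The only subtle points are the existence of subgradients on the whole space and their local boundedness, both of which are standard consequences of $f$ being real-valued (hence proper and with $\operatorname{dom} f = \mathbb{R}^d$) and convex; no other machinery is needed. I expect no real obstacle beyond carefully invoking these two facts.
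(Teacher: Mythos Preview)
The paper does not supply a proof of this proposition; it is stated as a known characterization and immediately used to motivate the network architecture. Your argument is correct and is the standard route: for the first item you use that a real-valued convex function on $\mathbb{R}^d$ has a nonempty subdifferential at every point, which produces an affine minorant touching $f$ at $x$; for the second item you combine continuity of $f$ with local boundedness of the subgradient to control $\varphi_{i_0}(x)$ when $x_{i_0}$ is close to $x$. Both facts (nonemptiness and local boundedness of $\partial f$) follow from $\operatorname{dom} f = \mathbb{R}^d$, as you note, so there is no gap.
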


The preceding proposition suggests (see \cite{pmlr-v38-balazs15, pmlr-v70-amos17b}) to approximate any convex function $f:\mathbb{R}^d \to \mathbb{R}$ by functions of the form:
\begin{equation}
    \label{max_affine_functions}
    \mathbb{R}^d \ni x \mapsto \underset{i=1:n}{\max} \hspace{0.1cm} \varphi(x; \bm{w_i}) := \underset{i=1:n}{\max} \hspace{0.1cm} \big[\langle w_i, x\rangle + w_i^0  \big], \quad \bm{w_i} := ( w_i, w_i^0) \in \mathbb{R}^{d+1}.
\end{equation}
This observation, the building block of our method, is thoroughly discussed in the next section.

\subsection{Architecture and training}
At first, disregarding the maximum function, function \eqref{max_affine_functions} can be think as a \emph{Linear Neural Network} (see \cite{pmlr-v70-amos17b, JMLR:v25:23-0493}) with one layer. Adding the maximum function makes the function \eqref{max_affine_functions} boils down to a classic Feedforward Neural Network (FNN) with one hidden layer and the maximum function serving as the output activation function. That is, we consider a FNN with one layer of the form:
\begin{equation}
\label{icnn_one_layer}
    \mathbb{R}^d \ni x \mapsto f_{n, 1}(x; \mathcal{W}) := \phi \circ a^{\mathcal{W}}(x) \in \mathbb{R}, \quad 
\end{equation}
where for any $x \in \mathbb{R}^d$:
\begin{equation}
    a^{\mathcal{W}}(x) := Wx + W^0, \quad \mathcal{W} := (W, W^0) := (\bm{w_i})_{i \in \{1,\ldots,n\}} \in \mathbb{M}_{n, d} \times \mathbb{R}^n \quad \text{with} \quad  \bm{w_i} := ( w_i, w_i^0) \in \mathbb{R}^{d+1},
\end{equation}
and the function $\phi$ in \eqref{icnn_one_layer} is the maximum function:
\begin{equation}
    \label{max_function}
    \phi_n^{\max} : \mathbb{R}^n \ni (y_1, \ldots, y_n) \mapsto \max(y_1, \ldots, y_n).
\end{equation}
For optimization purposes, we can consider a regularization of the maximum function based on the classic \emph{LogSumExp} function. That is, instead of function $\phi_n^{\max}$, one can consider the following alternative:
\begin{equation}
    \label{log_sum_exp_function}
    \phi_n^{\lambda}  : \mathbb{R}^n \ni (y_1, \ldots, y_n) \mapsto \frac{1}{\lambda} \log\Bigg(\sum_{i= 1}^n e^{\lambda y_i}\Bigg), \quad \lambda > 0
\end{equation}
which is legitimated by the fact that for any $(y_1, \ldots, y_n) \in \mathbb{R}^n $, one has:
\begin{equation}
    \label{ineq_lse}
    \max(y_1, \ldots, y_n) \le \phi_n^{\lambda}(y_1, \ldots, y_n)   \le \max(y_1, \ldots, y_n) + \frac{\log(n)}{\lambda}
\end{equation}
so that for all $n \in \mathbb{N}^{*}$, $\phi_n^{\lambda}(y_1, \ldots, y_n) \xrightarrow[\lambda \to +\infty]{} \max(y_1, \ldots, y_n)$. Moreover, note that, $\phi_n^{\lambda}$ is convex and non-decreasing so that, disregarding the activation function used, networks of the form \eqref{icnn_one_layer} still produce a convex function.

In practice, we have noticed, through some numerical experiments, that networks of the form \eqref{icnn_one_layer} are not satisfactory because of a slow training time, the loss function getting stuck after a certain number of iterations (see later in Figure \ref{toy_iccn_appr_loss}). This pushed us to consider what we call \emph{Scrambling}. The latter builds upon the fact that composing affine functions remains affine. Thus, one can add several hidden layers on the previous network \eqref{icnn_one_layer} but without activation functions between subsequent layers. That is, we consider a \emph{FNN} of the form:
\begin{equation}
    \label{icnn_function_def}
    \mathbb{R}^d \ni x \mapsto f_{n, L}(x; \bm{\mathcal{W}}) := \phi \circ a_L^{\mathcal{W}_L} \circ a_{L-1}^{\mathcal{W}_{L-1}} \circ \cdots \circ a_1^{\mathcal{W}_1}(x) \in \mathbb{R},
\end{equation}
where,
\begin{itemize}
\item $\bm{\mathcal{W}} := \big(\mathcal{W}_1, \dots, \mathcal{W}_L \big) \in \Theta := \prod_{\ell=1}^L \big(\mathbb{M}_{q_\ell, q_{\ell-1}} \times \mathbb{R}^{q_\ell}\big)$ with $q_\ell$ being integers denoting the number of units per layer. We set $q_0 = d$ and $q_\ell = n$ for $\ell \in \{1,\ldots, L\}$.
\item $L$ denotes the number of layers. For each layer, the affine function $a_\ell^{\mathcal{W}_\ell}:\mathbb{R}^{q_{\ell-1}} \to \mathbb{R}^{q_\ell}$ is defined by $a_\ell^{\mathcal{W}_\ell}(x) = W_\ell \cdot x + W_\ell^0$ with $\mathcal{W}_\ell :=(W_\ell, W_\ell^0) = (\bm{w_{\ell, i}})_{i \in \{1,\ldots,n\}} \in \mathbb{M}_{q_\ell, q_{\ell-1}} \times \mathbb{R}^{q_\ell}$ and $\bm{w_{\ell, i}} = (w_{\ell, i}, w_{\ell, i}^0) \in \mathbb{R}^{q_{\ell-1} + 1}$.
\item $\phi:\mathbb{R}^n \to \mathbb{R}$ is the activation function given either by \eqref{max_function} or \eqref{log_sum_exp_function}.
\end{itemize}

This use of \emph{Scrambling} relates to the \emph{attention mechanism} \cite{NIPS2017_3f5ee243}. Before discussing this connection let us highlight a numerical feature that sheds light on \emph{Scrambling}'s effectiveness. In the representation \eqref{max_affine_functions}, we say that \textit{$k$ (with $1 \le k \le n$) hyperplanes are activated}, if $k$ (out of $n$) hyperplanes achieve the maximum in \eqref{max_affine_functions}. We use equation \eqref{max_affine_functions} here because, with the regularization of the maximum function, it is no longer appropriate to speak of hyperplanes. Our experiments show that with \emph{Scrambling}, multiple hyperplanes are activated, whereas, in the simpler \emph{LinearMax} network, only a few hyperplanes are activated. This illustrates how the weight mixing in \emph{Scrambling} enhances flexibility, enabling all hyperplanes to adjust collectively: adjusting weights in one hyperplane also affects the others due to the interconnected weight products across layers.

\begin{remark}[Connection with a attention mechanism]
The introduction of one or more linear hidden layers (without any non-linear activation function) between other layers might initially seem counter-intuitive. Indeed, from a purely mathematical perspective, the composition of two linear transformations is equivalent to a single linear transformation. As such, the addition of multiple linear layers does not increase the representational power of the network. However, this equivalence does not hold in the context of optimization. When multiple linear layers are used, the system becomes over-parameterized, leading to a scenario where the parameters of interest are derived as a linear combination of auxiliary parameters. This over-parameterization introduces additional degrees of freedom that can have significant implications during training. Specifically, in gradient-based optimization algorithms like gradient descent, the auxiliary parameters are updated iteratively. These updates implicitly influence the effective parameters of the network, allowing for potentially smoother or more efficient convergence dynamics. As a result, while the expressiveness of the model remains unchanged, the optimization landscape may be altered in a way that benefits training. This idea can be compared to the first linear transformation in the attention mechanism, where input vectors are transformed into queries, keys, and values through linear layers. In this case, the linear layers play a critical role in re-parameterizing the input data into different subspaces, each optimized for a specific role in the attention mechanism. Although the linear transformations in attention are mathematically simple, their impact on the optimization process is substantial. By projecting the data into these learned subspaces, the network is able to selectively focus on relevant parts of the input, effectively \q{scrambling} the information in a way that enhances both the model’s learning efficiency and its ability to capture complex relationships. Similarly, in over-parameterized linear networks, the optimization process benefits from the flexibility provided by auxiliary parameters, which can enhance convergence even when the model’s expressive capacity is theoretically unchanged.
\end{remark}

Depending on the activation function used for the output layer, two networks are considered:
\begin{equation}
    \label{icnn_base_max}
    \mathbb{R}^d \ni x \mapsto f_{n, L}^{\max}(x; \bm{\mathcal{W}}) := \phi_n^{\max} \circ a_L^{\mathcal{W}_L} \circ a_{L-1}^{\mathcal{W}_{L-1}} \circ \cdots \circ a_1^{\mathcal{W}_1}(x),
\end{equation}
\begin{equation}
    \label{icnn_lse}
    \mathbb{R}^d \ni x \mapsto f_{n, L}^{\lambda}(x; \bm{\mathcal{W}}) := \phi_n^{\lambda} \circ a_L^{\mathcal{W}_L} \circ a_{L-1}^{\mathcal{W}_{L-1}} \circ \cdots \circ a_1^{\mathcal{W}_1}(x).
\end{equation}
Besides, we consider the following configurations:
\begin{countlist}[label={(\alph*)}]{otherlist2}
  \item \label{iccn_1l_names} A network with no hidden layer i.e. $L = 1$ and the two versions (regular and smooth) of the maximum function yielding respectively networks named \textbf{\emph{LinearMax (LM)}} (regular) and \textbf{\emph{LinearLogSumExp (L2SE)}} (smooth).

  \item \label{iccn_sevL_names} For $L > 1$, we get the \emph{Scrambling} version by considering variations of \emph{LinearMax (LM)} and \emph{LinearLogSumExp (L2SE)} with $L-1$ hidden layers yielding networks named \textbf{\emph{L-ScrambledLinearMax (L-SLM)}} and \textbf{\emph{L-ScrambledLinearLogSumExp (L-SL2SE)}}.
\end{countlist}

We could have labeled the networks defined in \ref{iccn_1l_names} with the generic names outlined in \ref{iccn_sevL_names}, but we chose not to in order to emphasize the following point. Networks \emph{LinearMax (LM)} or \emph{LinearLogSumExp (L2SE)}, where $L = 1$, are given by \eqref{icnn_one_layer}, corresponding to a straightforward application of Proposition \ref{charac_convex_func_max_affine} as explained earlier. However, in the \emph{Scrambling}, networks \emph{L-ScrambledLinearMax (L-SLM)} and \emph{L-ScrambledLinearLogSumExp (L-SL2SE)} extend the preceding straightforward application by using the fact that the composition of linear functions remains linear. This allows for stacking hidden layers without activation functions while still producing a maximum of affine functions as output. An illustration of our networks is shown in Figure \ref{linear_max_net} and Figure \ref{scrambled_net}.

\begin{figure}[ht!]
    \centering
    \includegraphics[width=0.7\columnwidth]{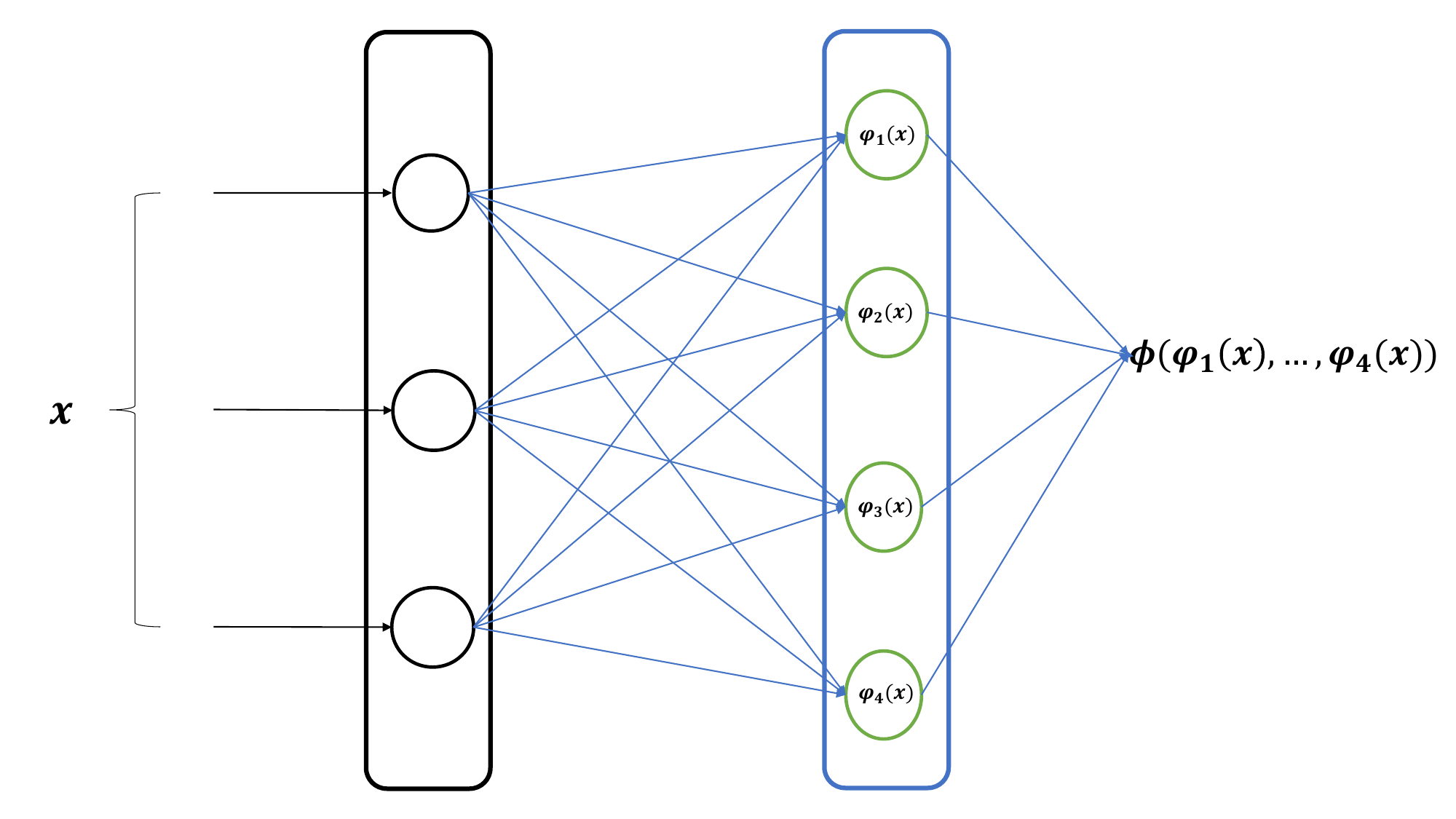}
    \caption{\textit{Illustration of a fully Convex Network architecture: simple Linear network ($L=1$).}}
    \label{linear_max_net}
\end{figure}

\begin{figure}[ht!]
    \centering
    \includegraphics[width=0.7\columnwidth]{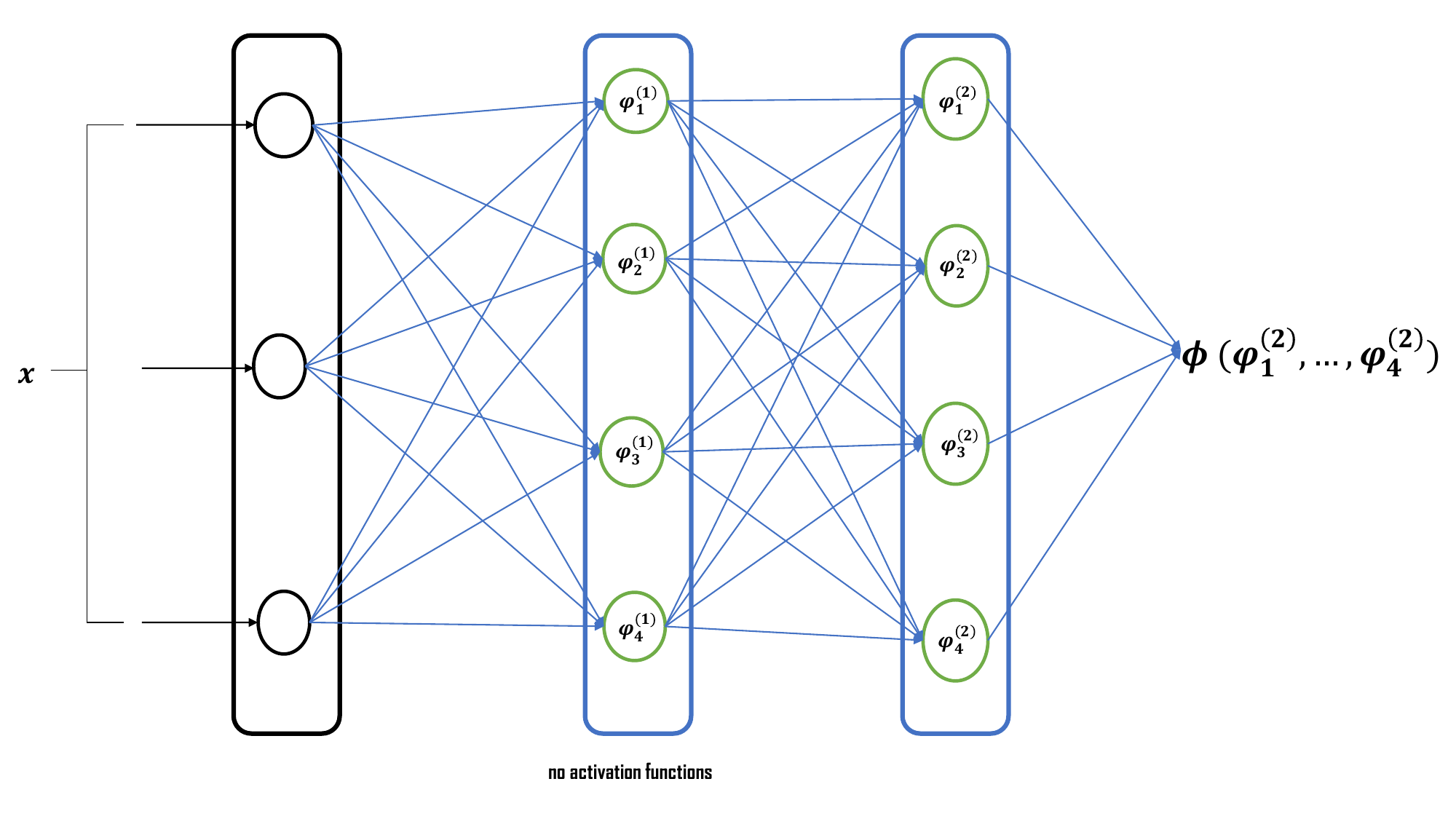}
    \caption{\textit{Illustration of a fully Convex Network architecture: scrambled Linear network ($L=2$).}}
    \label{scrambled_net}
\end{figure}

\begin{remark}[Parameter $\lambda$ for $\phi_n^{\lambda}$]
\label{choix_lambda}
    The parameter $\lambda$ in \eqref{log_sum_exp_function} has to be chosen carefully to insure the convergence of the $\phi_n^{\lambda}$ function toward the maximum function. Additionally, $\lambda$ acts as a scaling parameter which cannot be set disregarding data $(y_1, \ldots, y_n)$ for which we want to compute the maximum. For that reason and to allow for flexibility, we consider $\lambda$ of the form $c \cdot \Tilde{\lambda}$, where $\Tilde{\lambda}$ is a trainable parameter and $c$ is a fixed constant.
\end{remark}

Once the \emph{Convex Network} architecture is defined, it needs to be trained to solve a learning task which in our case is the approximation of a convex function $f:\mathbb{R}^d \to \mathbb{R}$. To achieve this, we build a network $f_n^{\max}(x;\bm{\mathcal{W}})$ (or $f_n^{\lambda}(x;\bm{\mathcal{W}})$) of the form \eqref{icnn_base_max} (or \eqref{icnn_lse}), and determine its weights by minimizing the corresponding loss function:
\begin{equation}
\label{loss_icnn}
     \underset{\bm{\mathcal{W}}}{\inf} \hspace{0.1cm} \Bigg[\int_{\mathbb{R}^d}^{} \big|f(x) - f_{n, L}^{\max}(x;\bm{\mathcal{W}}) \big|^2 \, \mu(\mathrm{d}x) \Bigg]^{1/2} \quad \text{or} \quad \underset{\bm{\mathcal{W}}}{\inf} \hspace{0.1cm} \Bigg[\int_{\mathbb{R}^d}^{} \big|f(x) - f_{n,L}^{\lambda}(x;\bm{\mathcal{W}})\big|^2 \, \mu(\mathrm{d}x) \Bigg]^{1/2},
\end{equation}

\noindent
where $\mu$ is the distribution of input data $x$. Besides, the minimization of either loss functions can be performed by a Stochastic Gradient Descent method. The approximation capacity of such a neural network trained to minimize loss functions defined in \eqref{loss_icnn} is studied in the following section.

\subsection{Convex functions approximation}
Build a \emph{Convex Network} of the form \eqref{icnn_base_max} or \eqref{icnn_lse} which minimizes either loss functions in \eqref{loss_icnn} yields an approximation error. This error may be analyzed in different ways. In this paper, we present two results, providing error bounds for this approximation on compact sets and in $\mathbb{L}_{\mathbb{R}^d}^p$ spaces. We start with the analysis of the approximation error on compact sets.

Let $A < B$ be real numbers and consider an $\ell^{\infty}$-ball $K = [A, B]^d \subset \mathbb{R}^d$. Then, for any convex function $f:\mathbb{R}^d \to \mathbb{R}$, define the following two approximation errors on the compact set $K$ by:
\begin{equation}
    \label{approx_error_compact_set_base}
    \mathcal{E}_K(f, f_n^{\max}) := \underset{\mathcal{W} = (\bm{w_i})_{i=1:n} = (w_i, w_i^0)_{i=1:n}}{\min} \hspace{0.1cm} \Bigg[ \underset{x \in K}{\sup} \hspace{0.1cm}\Big|f(x) - f_n^{\max}(x; \mathcal{W})\Big| \Bigg],
\end{equation}
\begin{equation}
    \label{approx_error_compact_set_lse}
    \mathcal{E}_K(f, f_n^{\lambda}) := \underset{\mathcal{W} = (\bm{w_i})_{i=1:n} = (w_i, w_i^0)_{i=1:n}}{\min} \hspace{0.1cm} \Bigg[ \underset{x \in K}{\sup} \hspace{0.1cm}\Big|f(x) - f_n^{\lambda}(x; \mathcal{W})\Big| \Bigg].
\end{equation}
In the preceding definition of the errors, networks $f_n^{\max}, f_n^{\lambda}$ correspond to the simple version of our \emph{Convex Network} (with $L = 1$, see \eqref{icnn_one_layer}). All the results in this section can be extended to the \emph{scrambling} version in a straightforward manner since, from a mathematical standpoint, composing affine functions still yields an affine function. Thus, in this section, we drop the index $L$ (in \eqref{icnn_base_max} or in \eqref{icnn_lse}) representing the number of layers.

Then, we have the following convergence result.

\begin{theorem}[Uniform approximation on compact set]
\label{thm_error_approx_icnn_sur_compact}
    If $f : K \to \mathbb{R}$ is convex, continuously differentiable on $K$, with a bounded gradient $\nabla f$, then one has the following two bounds:
    \begin{equation}
        \label{error_on_compact}
        \mathcal{E}_K(f, f_n^{\max}) = \mathcal{O}\big(n^{-2/d}\big) \quad \text{as} \quad n \to +\infty.
    \end{equation}
        \begin{equation}
        \label{error_on_compact_lse}
        \mathcal{E}_K(f, f_n^{\lambda}) = \mathcal{O}\Big(n^{-2/d} + \frac{\log(n)}{\lambda}\Big) \quad \text{as} \quad n \to +\infty.
    \end{equation}
\end{theorem}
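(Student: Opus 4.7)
The plan is to exhibit an explicit near-optimal weight configuration $\mathcal{W}^\star$ realizing the rate $n^{-2/d}$, then transfer the bound to the LogSumExp network using the sandwich \eqref{ineq_lse}. The weights will be built from tangent hyperplanes of $f$ at a well-chosen collection of anchor points $(x_i)_{i=1:n}\subset K$: namely $w_i^\star := \nabla f(x_i)$ and $w_i^{0\star} := f(x_i) - \langle \nabla f(x_i),x_i\rangle$, so that $f_n^{\max}(x;\mathcal{W}^\star) = \max_{i=1:n}\big[\langle \nabla f(x_i),x-x_i\rangle + f(x_i)\big]$. By convexity each affine piece lies below $f$, so the one-sided bound $f_n^{\max}(\cdot\,;\mathcal{W}^\star)\le f$ on $K$ is automatic and the whole work reduces to controlling the deficit $f(x) - f_n^{\max}(x;\mathcal{W}^\star)$ from above.

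Next, I would choose the anchors so that the covering radius $\rho_n := \sup_{x \in K} \min_{i=1:n}|x-x_i|$ has order $n^{-1/d}$. A uniform cubic grid of $\lfloor n^{1/d}\rfloor^d$ points in $[A,B]^d$ (padded with repeated points up to exactly $n$) yields $\rho_n \le \tfrac{\sqrt d\,(B-A)}{2}\,n^{-1/d}$; equivalently, one may take $(x_i)_{i=1:n}$ to be an optimal quadratic $n$-quantizer of the uniform law on $K$, which attains the same order by Zador's theorem and is the variant suggested by the paper's explicit reference to optimal quantization. This $n^{-1/d}$ covering rate is the only place the ambient dimension enters.

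For any $x \in K$, let $i^\star(x) \in \argmin_{i=1:n}|x-x_i|$. Starting from the integral identity $f(x) - f(x_{i^\star}) - \langle \nabla f(x_{i^\star}), x - x_{i^\star}\rangle = \int_0^1 \langle \nabla f(x_{i^\star}+t(x-x_{i^\star})) - \nabla f(x_{i^\star}),\, x - x_{i^\star}\rangle\, dt$ and using a Lipschitz bound $|\nabla f(y) - \nabla f(z)| \le L|y-z|$ on $K$ (which is the natural reading of the regularity hypothesis required to obtain a genuinely quadratic remainder), this yields $0\le f(x) - f_n^{\max}(x;\mathcal{W}^\star) \le \tfrac{L}{2}|x-x_{i^\star}|^2 \le \tfrac{L}{2}\rho_n^2 = \mathcal{O}(n^{-2/d})$. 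Taking the supremum over $x\in K$ and then the infimum over $\mathcal{W}$ in \eqref{approx_error_compact_set_base} gives \eqref{error_on_compact}.

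For the LogSumExp case, the inequality \eqref{ineq_lse} applied pointwise yields, uniformly in the weights, $\sup_{x\in K}\big|f_n^\lambda(x;\mathcal{W}) - f_n^{\max}(x;\mathcal{W})\big| \le \log(n)/\lambda$. Evaluating at the previously constructed $\mathcal{W}^\star$ and applying the triangle inequality yields $\mathcal{E}_K(f,f_n^\lambda)\le \sup_{x\in K}|f(x) - f_n^{\max}(x;\mathcal{W}^\star)| + \log(n)/\lambda = \mathcal{O}(n^{-2/d}) + \log(n)/\lambda$, which is \eqref{error_on_compact_lse}. The main subtlety is the regularity assumption: obtaining a true quadratic Taylor remainder — and hence the exponent $2/d$ rather than $1/d$ — requires at least a Lipschitz control of $\nabla f$, since with merely a continuous bounded gradient the remainder is only $o(|x-x_{i^\star}|)$, which would degrade the rate to $o(n^{-1/d})$. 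Everything else (convexity for the one-sided bound, covering numbers of the cube for the other) is routine.
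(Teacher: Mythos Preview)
Your proof is correct and follows essentially the same route as the paper: tangent hyperplanes at covering points for \eqref{error_on_compact} (which the paper simply attributes to the covering-number argument in \cite{pmlr-v38-balazs15}), and the triangle inequality combined with \eqref{ineq_lse} for \eqref{error_on_compact_lse}. Your remark on the regularity hypothesis is well taken: the $n^{-2/d}$ rate genuinely requires a Lipschitz (or H\"older) modulus on $\nabla f$, not merely boundedness, and this is indeed the assumption in \cite{pmlr-v38-balazs15} on which the paper relies.
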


\begin{remark}
    In light of the preceding Theorem, in the regularized version (see \eqref{log_sum_exp_function}) of the maximum function, one can choose the parameter $\lambda$ wisely. For example in \eqref{error_on_compact_lse}, one may set:
    $$\lambda := n^{\frac{2}{d} + \frac{1}{2}} \implies \mathcal{E}_K(f, f_n^{\lambda}) = \mathcal{O}\big(n^{-2/d}\big).$$
    The same holds true for Theorem \ref{thm_error_approx_icnn_sur_L2}, where $\lambda$ can also be chosen wisely.
\end{remark}

\begin{proof}[Proof of Theorem \ref{thm_error_approx_icnn_sur_compact}]
    From the definition of network $f_n^{\max}$, we get:
    $$\mathcal{E}_K(f, f_n^{\max}) := \underset{(\bm{w_i})_{i=1:n}}{\min} \hspace{0.1cm} \Bigg[ \underset{x \in K}{\sup} \hspace{0.1cm}\Big|f(x) - \underset{i=1:n}{\max} \hspace{0.1cm} \varphi(x; \bm{w_i})\Big| \Bigg],$$

    \noindent
    where $\varphi(x; \bm{w_i})$ is of the form \eqref{max_affine_functions}. Then, claim \eqref{error_on_compact} follows from a proof, based on covering numbers, which may be found in \cite{pmlr-v38-balazs15}.

    \vspace{0.2cm}
    For the second error bound, we get by triangle inequality that:
    $$\mathcal{E}_K(f, f_n^{\lambda}) \le \mathcal{E}_K(f, f_n^{\max}) + \underset{\mathcal{W}}{\min} \hspace{0.1cm} \Bigg[ \underset{x \in K}{\sup} \hspace{0.1cm}\Big|f_n^{\max}(x; \mathcal{W}) - f_n^{\lambda}(x; \mathcal{W})\Big| \Bigg] \le \mathcal{E}_K(f, f_n^{\max}) + \frac{\log(n)}{\lambda},$$
    \noindent
    where the last inequality comes from the definition of networks $f_n^{\max}, f_n^{\lambda}$ (see \eqref{icnn_base_max} and \eqref{icnn_lse}) combined with the inequality \eqref{ineq_lse}. Then, combining the preceding inequality with \eqref{error_on_compact} gives the desired result.
\end{proof}

We now study the $\mathbb{L}_{\mathbb{R}^d}^r(\mu)$-error, where $\mu$ is the distribution of input data $x$. That is, we consider the following errors:
\begin{equation}
    \label{error_approx_mu}
    \mathcal{E}_{r, \mu}(f, f_n^{\max}) := \underset{\mathcal{W} = (\bm{w_i})_{i=1:n} = (w_i, w_i^0)_{i=1:n}}{\min} \hspace{0.1cm} \Bigg[\int_{\mathbb{R}^d}^{} \Big|f(x) - f_n^{\max}(x; \mathcal{W}) \Big|^r \, \mu(\mathrm{d}x) \Bigg]^{1/r},
\end{equation}
\begin{equation}
    \label{error_approx_mu_lse}
    \mathcal{E}_{r, \mu}(f, f_n^{\lambda}) := \underset{\mathcal{W} = (\bm{w_i})_{i=1:n} = (w_i, w_i^0)_{i=1:n}}{\min} \hspace{0.1cm} \Bigg[\int_{\mathbb{R}^d}^{} \Big|f(x) - f_n^{\lambda}(x; \mathcal{W}) \Big|^r \, \mu(\mathrm{d}x) \Bigg]^{1/r}.
\end{equation}
Here again, we disregard the number of layers $L$. We then have the second convergence result.

\begin{theorem}[$\mathbb{L}_{\mathbb{R}^d}^p(\mu)$-approximation on $\mathbb{R}^d$]
\label{thm_error_approx_icnn_sur_L2}
\begin{countlist}[label={(\alph*)}]{otherlist1}
    \item \label{bound_error_lp_by_quantif} Let $f:\mathbb{R}^d \to \mathbb{R}$ be a convex differentiable function with $\alpha$-Holder gradient for $\alpha > 0$. Then, for every $r \in \big(0, \frac{2}{\alpha + 1}\big]$ and every distribution $\mu$ on $(\mathbb{R}^d,{\cal B}or (\mathbb{R}^d))$ having a finite second moment, one has:
    \begin{equation}
        \label{error_approx_fnc_error_quantif}
        {\cal E}_{r,\mu}(f, f_n^{\max})\le [\nabla f]_{\alpha, H} \cdot e_{2,n}(\mu)^{\alpha + 1},
    \end{equation}
        \begin{equation}
        \label{error_approx_fnc_error_quantif_lse}
        {\cal E}_{r,\mu}(f, f_n^{\lambda})\le [\nabla f]_{\alpha, H} \cdot e_{2,n}(\mu)^{\alpha + 1} + \frac{\log(n)}{\lambda},
    \end{equation}

where $[\nabla f]_{\alpha, H}$ denotes the Holder constant of the gradient of $f$ and for $p > 0$:
  \begin{equation}
      \label{quantif_error}
      e_{p, n}(\mu) := \underset{(x_1,\ldots, x_n) \in (\mathbb{R}^d)^n}{\inf} \hspace{0.1cm} \Big\|\underset{i=1:n}{\min} \hspace{0.1cm} \big|x - x_i\big|  \Big\|_{\mathbb{L}^p(\mu(\mathrm{d}x))}.
  \end{equation}

  \item \label{finite_moment_hyp_error_Lp} Moreover, if $\mu$ has a finite moment of order $2+\delta$ for some $\delta>0$ then 
  \begin{equation}
      \label{quantif_error_2}
      {\cal E}_{r,\mu}(f,f_n^{\max}) \le C_{d,\delta} \cdot [\nabla f]_{\alpha, H} \cdot \sigma^{\alpha+1}_{2+\delta}(\mu) n^{-\frac{\alpha+1}{d}} = O(n^{-(\alpha+1)/d}),
  \end{equation}
  \begin{equation}
      \label{quantif_error_2_lse}
      {\cal E}_{r,\mu}(f,f_n^{\lambda}) = O\Big(n^{-(\alpha+1)/d} + \frac{\log(n)}{\lambda}\Big),
  \end{equation}
where the positive constant $C_{d,\delta}$ does not depend on $\mu$ and for $p > 0$, $\sigma_p(\mu)$ is defined by:
$$\sigma_p(\mu) := \inf_{a\in \mathbb{R}^d} \|X-x\|_p.$$

\noindent
Note that, \eqref{quantif_error_2} and \eqref{quantif_error_2_lse} still hold true if $\mu$ is radial for some norm $N_0$ outside a compact K (i.e. $\mu= h\cdot \lambda_d$ with $h(\xi) = g(N_0(\xi-a))$ for every $\xi \notin K$, where $a\in \mathbb{R}^d$ with $h$ assumed bounded). In the latter case, one can set $\delta =0$ (quasi-radial setting).

  \item \label{sharp_rate_error} If the distribution $\mu = h \cdot \lambda_d$ has a finite moment of order $\frac{sd}{2+d-s} + \delta$ for some $s \in \big(\frac{2}{1+\alpha}, \frac{2+d}{1+\alpha}\big)$ and $\delta > 0$, and if assumption in \ref{bound_error_lp_by_quantif} are in force, then \eqref{quantif_error_2} and \eqref{quantif_error_2_lse} still hold.

  \item \label{more_sharp_rate_error} {(\textit{Asymptotic bounds})} If the distribution $\mu = h \cdot \lambda_d$ has a finite moment of order $r+\delta$ for some $\delta > 0$ and $f$ is $\mathcal{C}^2$ with a uniformly continuous bounded Hessian $\nabla^2 f$, then one has:
  $$\underset{n}{\lim} \hspace{0.1cm} n^{1/d} \cdot \mathcal{E}_{r, \mu}(f, f_n^{\max}) \le  \mathcal{J}_{r, d} \quad \text{and} \quad \underset{n}{\lim} \hspace{0.1cm} n^{1/d} \cdot \mathcal{E}_{r, \mu}(f, f_n^{\lambda}) \le \mathcal{J}_{r, d}  + n^{1/d}\frac{\log(n)}{\lambda},$$
  where,
  $$\mathcal{J}_{r, d} = \frac{1}{\sqrt{2}}\Bigg(\int_{\mathbb{R}^d}^{} \det\big(\nabla^2f(x)\big)^{\frac{r}{2(d+r)}} \cdot h^{\frac{d}{d+r}}(x) \, \lambda_d(\mathrm{d}x)  \Bigg)^{\frac{1}{r} + \frac{1}{d}}.$$
\end{countlist}    
\end{theorem}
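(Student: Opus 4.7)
The proof decomposes naturally along parts \ref{bound_error_lp_by_quantif}--\ref{more_sharp_rate_error}, but each part rests on the same template: exploit Proposition \ref{charac_convex_func_max_affine} to approximate $f$ by a supremum of tangent hyperplanes chosen at a well-crafted codebook, bound the resulting pointwise error by a power of the nearest-neighbour distance, and then invoke classical optimal quantization estimates from \cite{bookGillesQuantifMarginale, Pagès2018Quantif}. The log-sum-exp inequalities will follow in every case by the triangle inequality, using that $0 \le \phi_n^{\lambda} - \phi_n^{\max} \le \log(n)/\lambda$ from \eqref{ineq_lse}.

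For part \ref{bound_error_lp_by_quantif}, the plan is as follows. Fix any codebook $(x_1,\dots,x_n)\subset \mathbb{R}^d$ and set $\bm{w_i} = (\nabla f(x_i),\, f(x_i) - \langle \nabla f(x_i), x_i\rangle)$ so that $\varphi(x;\bm{w_i}) = \langle \nabla f(x_i), x-x_i\rangle + f(x_i)$ is the tangent hyperplane at $x_i$. Convexity gives $f_n^{\max}(x;\mathcal{W}) \le f(x)$ pointwise, and the $\alpha$-Hölder property of $\nabla f$, combined with the integral Taylor identity $f(x)-\varphi(x;\bm{w_i}) = \int_0^1 \langle \nabla f(x_i+t(x-x_i)) - \nabla f(x_i),\, x-x_i\rangle\, dt$, yields $0 \le f(x) - \varphi(x;\bm{w_i}) \le \tfrac{[\nabla f]_{\alpha,H}}{\alpha+1} |x-x_i|^{\alpha+1}$. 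Picking the nearest neighbour $i^*(x) \in \argmin_i |x-x_i|$ gives the pointwise estimate $|f(x)-f_n^{\max}(x;\mathcal{W})| \le C_\alpha [\nabla f]_{\alpha,H} \min_i |x-x_i|^{\alpha+1}$. Since $r(\alpha+1)\le 2$, Jensen's inequality (applied to the probability measure $\mu$) gives $\|\min_i|x-x_i|\|_{L^{r(\alpha+1)}(\mu)} \le \|\min_i|x-x_i|\|_{L^2(\mu)}$; raising to the power $\alpha+1$ and taking the infimum over codebooks yields \eqref{error_approx_fnc_error_quantif}. The bound \eqref{error_approx_fnc_error_quantif_lse} is immediate by the remark above.

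Parts \ref{finite_moment_hyp_error_Lp} and \ref{sharp_rate_error} reduce to bounding $e_{2,n}(\mu)$ via quantitative Zador/Pierce-type results. Under a $(2+\delta)$-moment hypothesis, Pierce's lemma gives $e_{2,n}(\mu) \le C_{d,\delta}\sigma_{2+\delta}(\mu) n^{-1/d}$, yielding \eqref{quantif_error_2} after raising to $\alpha+1$; the quasi-radial case outside a compact is handled by splitting the codebook between a compact core (where boundedness suffices) and a tail that is controlled by the radial density. Part \ref{sharp_rate_error} sharpens the moment requirement by using a more refined Pierce-type argument where the codebook density is chosen proportional to $h^{d/(d+r)}$ on the bulk and coarsened on the tail in the spirit of \cite{Pagès2018Quantif}.

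The main obstacle is part \ref{more_sharp_rate_error}, which asks for the sharp asymptotic constant. Here one refines the template by replacing the Taylor-Hölder bound by the second-order Taylor expansion: for $x$ near its nearest center $x_{i^*(x)}$, uniform continuity of $\nabla^2 f$ gives $f(x) - \varphi_{i^*(x)}(x) = \tfrac{1}{2}(x-x_{i^*(x)})^\top \nabla^2 f(x_{i^*(x)}) (x-x_{i^*(x)}) + o(|x-x_{i^*(x)}|^2)$ uniformly in $x$. The approximation problem thus becomes, asymptotically, a weighted $r/2$-order quantization of $\mu$ under the position-dependent quadratic form induced by $\nabla^2 f$. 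Applying the sharp Zador-Bucklew-Wise theorem in this Riemannian setting (optimizing the codebook density against the local volume element $\det(\nabla^2 f)^{1/2}$) produces exactly the constant $\mathcal{J}_{r,d}$. The technical heart is to justify passing from the pointwise expansion to the $L^r$ limit uniformly, which uses the $(r+\delta)$-moment of $\mu$ to control any residual tail and the uniform continuity and boundedness of $\nabla^2 f$ to make the $o(\cdot)$ remainder uniform. The log-sum-exp variant again follows by the triangle inequality, giving the extra term $n^{1/d}\log(n)/\lambda$.
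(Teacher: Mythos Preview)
Your proposal is correct and follows essentially the same template as the paper: tangent hyperplanes at a codebook, pointwise control by a power of the nearest-neighbour distance, then optimal-quantization estimates, with the $\phi_n^\lambda$ variants handled by the uniform bound \eqref{ineq_lse}.

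There is one local difference worth recording. In part \ref{bound_error_lp_by_quantif} you obtain the pointwise bound via the integral Taylor identity, which gives the sharper constant $[\nabla f]_{\alpha,H}/(\alpha+1)$. The paper instead applies the convexity inequality a second time, writing $f(x)-f(x_i)\le\langle\nabla f(x),x-x_i\rangle$ so that $f(x)-\varphi_i(x)\le\langle\nabla f(x)-\nabla f(x_i),x-x_i\rangle\le[\nabla f]_{\alpha,H}|x-x_i|^{\alpha+1}$ by Cauchy--Schwarz and H\"older continuity. Both routes are elementary; yours yields a slightly better constant, while the paper's avoids the integral remainder altogether. For part \ref{more_sharp_rate_error} the paper simply invokes Zador's theorem for Bregman divergences \cite{NIPS2016_c4851e8e, thesisGuillaume}; your Riemannian-quantization sketch is exactly what underlies that result, since the Bregman divergence of $f$ is locally $\tfrac12(x-y)^\top\nabla^2 f(y)(x-y)$.
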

An important part of the proof of the preceding Theorem relies on how one may bound the error term $e_{p, n}(\mu)$ in \eqref{quantif_error}. It turns out that this error term is the $\mathbb{L}^p$-quantization error with respect to the input distribution $\mu$. Let us recall a short background behind \emph{optimal quantization theory} \cite{LuschgyQuantif, bookGillesQuantifMarginale, Pagès2018Quantif} which aims at approximating continuous random vectors by discrete ones.

\begin{definition}
    \begin{countlist}[label={(\alph*)}]{otherlist5}
        \item Let $\alpha_N = (x_1,\ldots,x_N) \subset \mathbb{R}^d$ be a subset of size $N$, called $N$-quantizer. A Borel partition $(C_i^N(\alpha_N))_{1 \le i \le N} := (C_i^N)_{1 \le i \le N}$ of $\mathbb{R}^d$ is a \emph{Voronoi partition} of $\mathbb{R}^d$ induced by the $N$-quantizer $\alpha_N$ if, for every $i \in \{1,\ldots, N\}$,
        $$C_i^N \subseteq \Big\{\xi \in \mathbb{R}^d, \quad |\xi-x_i|  \le \underset{j \neq i}{\min} \hspace{0.1cm} |\xi-x_j|  \Big\}.$$
        Points $x_i$ are called \emph{Centroids} of the Borel sets $C_i^N$ which in turn are called \emph{Voronoi cells} of the partition induced by the $N$-quantizer $\alpha_N$.
        
        \item Let $X \in \mathbb{L}_{\mathbb{R}^d}^2(\Omega, \mathcal{A}, \mathbb{P})$ for some probability space $(\Omega, \mathcal{A}, \mathbb{P})$. The Voronoi quantization of the random vector $X$ induced by $(C_i^N)_{1 \le i \le N}$ is defined by the projection of $X$ onto the Voronoi partition $(C_i^N)_{1 \le i \le N}$:
        $$\hat{X}^{\alpha_N} := \sum_{i = 1}^{N} x_i \mathbf{1}_{C_i^N(\alpha_N)}(X).$$

        \item The subset $\alpha_N = (x_1,\ldots,x_N)$ is an \emph{$\mathbb{L}^p$-$N$-quantizer} for the random vector $X$ if it minimizes the $\mathbb{L}^p_{\mathbb{R}^d}$-quantization error:
        \begin{equation}
            \label{min_lp_quantif_error}
            \underset{\bm{y}_N := (y_1,\ldots,y_N) \in \big(\mathbb{R}^d\big)^N}{\inf} \hspace{0.2cm} \big\| X-\hat{X}^{\bm{y}_N} \big\|_{\mathbb{L}^p_{\mathbb{R}^d}(\mathbb{P})} = \underset{\bm{y}_N := (y_1,\ldots,y_N) \in \big(\mathbb{R}^d\big)^N}{\inf} \hspace{0.2cm} \Big\|\underset{1 \le i \le N}{\min}\big|X - y_i \big|  \Big\|_{\mathbb{L}^p_{\mathbb{R}^d}(\mathbb{P})}.
        \end{equation}
    \end{countlist}
\end{definition}

It is classic background (see Theorem 5.1 in \cite{Pagès2018Quantif}) that the minimal $\mathbb{L}^p_{\mathbb{R}^d}$-quantization error (see \eqref{quantif_error} or \eqref{min_lp_quantif_error}), attained for an $\mathbb{L}^p$-$N$-quantizer, converges to $0$ as $N \to + \infty$. Zador's theorem (reproduced in Theorem \ref{Zador_thm} for the reader convenience), gives the rate of this convergence. We now have basic blocks to prove Theorem \ref{thm_error_approx_icnn_sur_L2}.

\begin{proof}[Proof of Theorem \ref{thm_error_approx_icnn_sur_L2}]
We only prove results on error ${\cal E}_{r,\mu}(f, f_n^{\max})$ since all the results established for ${\cal E}_{r,\mu}(f, f_n^{\max})$ systematically yields the results for ${\cal E}_{r,\mu}(f, f_n^{\lambda})$ by applying triangle inequality and inequalities \eqref{ineq_lse}.

\begin{countlist}[label={(\alph*)}]{otherlist4}
  \item First, notice that the function:
  $$\varphi_i(x) := \langle  \nabla f(x_i), x -x_i\rangle  + f(x_i)$$
  may be rewritten as follows:
  $$\varphi_i(x) = \langle w_i, x \rangle + w_i^0 \quad \text{with} \quad w_i = \nabla f(x_i) \in \mathbb{R}^d, w_i^0 = f(x_i) - \langle \nabla f(x_i), x_i\rangle \in\mathbb{R}.$$
  Then, it follows from the convexity of $f$ and then Cauchy-Schwartz inequality that:
  \begin{align*}
      0 \le f(x) - \underset{i=1:n}{\max} \hspace{0.1cm} \varphi_i(x) &= \underset{i=1:n}{\min} \hspace{0.1cm} \Big[f(x) - f(x_i) - \langle \nabla f(x_i), x-x_i  \rangle \Big]\\
      &\le \underset{i=1:n}{\min} \hspace{0.1cm} \Big[\langle \nabla f(x), x- x_i\rangle - \langle \nabla f(x_i), x-x_i  \rangle \Big]\\
      &= \underset{i=1:n}{\min} \hspace{0.1cm} \Big[\langle \nabla f(x) - \nabla f(x_i), x- x_i\rangle\Big]\\
      &\le \big[\nabla f \big]_{\alpha, H} \cdot \underset{i=1:n}{\min} \hspace{0.1cm} |x-x_i|^{\alpha + 1}.
  \end{align*}

  \noindent
  So that for every $r \in \big(0, \frac{2}{\alpha + 1}\big]$, one has:
  \begin{align*}
      \Big[\int_{\mathbb{R}^d}^{} \Big| f(x) - \underset{i=1:n}{\max} \hspace{0.1cm} \varphi_i(x) \Big|^r \, \mu(\mathrm{d}x) \Big]^{\frac{1}{r}} &\le \big[\nabla f \big]_{\alpha, H} \cdot  \Big\|\underset{i=1:n}{\min} \hspace{0.1cm} |x-x_i| \Big\|^{\alpha + 1}_{\mathbb{L}^{(\alpha+1) r}(\mu(\mathrm{d}x))}\\
      &\le \big[\nabla f \big]_{\alpha, H} \cdot  \Big\|\underset{i=1:n}{\min} \hspace{0.1cm} |x-x_i| \Big\|^{\alpha + 1}_{\mathbb{L}^{2}(\mu(\mathrm{d}x))}.
  \end{align*}

  \noindent
  This holds true for any $n$-tuple $(x_1,\ldots,x_n)$ so that:
  $$\Big[\int_{\mathbb{R}^d}^{} \Big| f(x) - \underset{i=1:n}{\max} \hspace{0.1cm} \varphi_i(x) \Big|^r \, \mu(\mathrm{d}x) \Big]^{\frac{1}{r}} \le [\nabla f]_{\alpha, H} \cdot e_{2,n}(\mu)^{\alpha + 1}.$$

  \noindent
  But, as pointed out at the beginning of the proof, $\underset{i=1:n}{\max} \hspace{0.1cm} \varphi_i(x)$ can be rewritten as $\underset{i=1:n}{\max} \hspace{0.1cm} \big[\langle w_i, x\rangle + w_i^0  \big]$ with $\bm{w_i} = (w_i, w_i^0) \in \mathbb{R}^{d+1}$. Therefore, by taking the infimum over $(\bm{w_i})_{i \in\{1,\ldots,n\}}$, we conclude that:
  $${\cal E}_{r,\mu}(f, f_n^{\max})\le [\nabla f]_{\alpha, H} \cdot e_{2,n}(\mu)^{\alpha + 1}.$$

  \item The first part of this claim follows from the non-asymptotic Zador’s Theorem \ref{Zador_thm} (from optimal quantization theory) combined with inequalities \eqref{error_approx_fnc_error_quantif} and \eqref{error_approx_fnc_error_quantif_lse}. For the second part, we refer the reader to Theorem 2.1.17 in \cite{bookGillesQuantifMarginale}.

\item This follows from the proof of claim \ref{bound_error_lp_by_quantif} and a straightforward application of Theorem 4.3 in \cite{PAGES2018847}.

\item This follows from Zador's extension to Bregman divergence (see \cite{NIPS2016_c4851e8e, thesisGuillaume}).
\end{countlist} 
\end{proof}

We now have theoretical evidences of the performance of \emph{Convex Networks} of the form \eqref{icnn_base_max} or \eqref{icnn_lse} trained to minimize either loss functions in \eqref{loss_icnn}. Now, our focus is on the practical side, dealing with option pricing, especially we will consider options with convex payoffs.

\section{Options pricing through convexity}
\label{section_2}
In financial markets, an option is a financial derivative product giving the right (not the obligation) to its holder to buy (referred to as \emph{Call}) or sell (referred to as \emph{put}) a financial asset called the \emph{underlying asset}. We consider options on multi-assets and assume their dynamics is driven by the \emph{Black \& Scholes} model, where underlying assets are $d$ risky assets modelled by the following diffusion (in the risk-neutral world) for $i \in \{1,\ldots,d\}$ and $t \ge 0$:
\begin{equation}
    \label{bs_model}
    \frac{dS_t^i}{S_t^i} = (r-\delta_i) dt + \sigma_i dW_t^i,\quad S_0^i = s_0^i \in \mathbb{R}_{+}^{*},
\end{equation}

\noindent
where $r > 0$ is the risk less rate, $\sigma_i > 0$ and $\delta_i > 0$ are volatilities and dividend rates of each asset, and $\big(W_t^1,\ldots,W_t^d\big)$ is a $d$-dimensional Brownian motion with instantaneous correlation $\rho_{i, j}$ for $i,j \in \{1,\ldots, d\}$. Note that, in this model, the risky assets prices are given by:
\begin{equation}
    \label{asset_price_bs_model}
    S_t^i = s_0^i e^{\big(r - \delta_i - \frac{\sigma_i^2}{2}\big)t + \sigma_i W_t^i}, \quad i \in \{1,\ldots, d\}.
\end{equation}

In this paper, our main focus is the pricing of options with a convex payoff with respect to the initial asset prices $(s_0^1,\ldots,s_0^d) \in \big(\mathbb{R}_{+}^{*}\big)^d$. In what follows, we consider three different types (with increasing complexity) of those options.

\subsection{Vanilla options: Basket option}
We first consider \emph{vanilla options} which are options which value depends only on the underlying asset value at certain terminal date (also called \emph{expiry}) $T$. Given a payoff function $g : \mathbb{R}^d \to \mathbb{R}$ of such an option, their price under the risk neutral probability is given by:
\begin{equation}
    \label{price_option_vanille}
    v_0 := v(s_0^1,\ldots,s_0^d) := \mathbb{E}\Big[e^{-rT}g\big(S_T^1,\ldots,S_T^d\big)\Big].
\end{equation}

In this paper, we consider a \emph{basket call option} which gives the right to buy, at the expiry $T$ and at a certain \emph{strike price} $K > 0$, a portfolio which is a linear (convex) combination of underlying assets modelled by \eqref{asset_price_bs_model} where we will assume that $\delta_i = 0$ (this assumption does not impact the methodology and is only in force in the case of Basket options). That is:
\begin{equation}
    \label{payoff_basket_call}
    g\big(S_T^1,\ldots,S_T^d\big) := \Bigg(\sum_{i = 1}^d \alpha_i S_T^i - K\Bigg)_{+}, \quad \alpha_i > 0, \quad \sum_{i = 1}^d \alpha_i = 1.
\end{equation}

\begin{Proposition}
    Considering the payoff function of a basket call option given by \eqref{payoff_basket_call}, the price of such option: $\big(\mathbb{R}_{+}^{*}\big)^d \ni (s_0^1,\ldots,s_0^d) \mapsto v(s_0^1,\ldots,s_0^d)$ is a convex function.
\end{Proposition}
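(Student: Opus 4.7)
The plan is to exploit the multiplicative structure of the Black--Scholes dynamics together with the fact that convexity is preserved under pointwise affine composition with a convex non-decreasing function and under expectation.

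First, I would rewrite the terminal asset prices by isolating the dependence on the initial values. Since $\delta_i = 0$ here, equation \eqref{asset_price_bs_model} gives
\begin{equation*}
S_T^i = s_0^i \, Z_i, \qquad Z_i := e^{\,(r - \sigma_i^2/2) T + \sigma_i W_T^i},
\end{equation*}
where the random factors $Z_i$ are strictly positive and do not depend on $(s_0^1, \ldots, s_0^d)$. This step is crucial because it turns the payoff's inner expression into an affine function of the initial prices.

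Next, for every fixed $\omega$, I would write the integrand as
\begin{equation*}
(s_0^1, \ldots, s_0^d) \;\longmapsto\; \Bigl(\sum_{i=1}^d \alpha_i Z_i(\omega)\, s_0^i - K \Bigr)_{\!+}.
\end{equation*}
The map $(s_0^1, \ldots, s_0^d) \mapsto \sum_i \alpha_i Z_i(\omega) s_0^i - K$ is affine, and the positive-part function $x \mapsto x_+$ is convex and non-decreasing on $\mathbb{R}$. Composing a convex non-decreasing function with an affine function yields a convex function, so the integrand is convex in $(s_0^1, \ldots, s_0^d)$ almost surely.

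Finally, I would conclude by taking expectation: for any $\lambda \in [0,1]$ and any two initial vectors $s_0, s_0' \in (\mathbb{R}_+^*)^d$, applying pointwise convexity inside the expectation and using linearity/monotonicity of $\mathbb{E}$ yields
\begin{equation*}
v\bigl(\lambda s_0 + (1-\lambda) s_0'\bigr) \le \lambda\, v(s_0) + (1-\lambda)\, v(s_0'),
\end{equation*}
after multiplying through by $e^{-rT}$. A brief justification that $\mathbb{E}\bigl[e^{-rT}(\sum_i \alpha_i s_0^i Z_i - K)_+\bigr] < \infty$ (which follows since $Z_i \in \mathbb{L}^1$ and $(\cdot)_+$ is $1$-Lipschitz) ensures all quantities are well defined. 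There is no real obstacle: the result is essentially a one-line consequence of the homogeneity of geometric Brownian motion in its initial value and of the stability of convexity under affine precomposition and under expectation.
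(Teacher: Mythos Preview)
Your proof is correct and follows exactly the paper's approach: the paper's proof consists of the single observation that, in the Black--Scholes model, almost surely $(s_0^1,\ldots,s_0^d)\mapsto g(S_T^1,\ldots,S_T^d)$ is convex, which is precisely what you establish via the factorization $S_T^i=s_0^iZ_i$ and the composition of the positive part with an affine map. Your additional integrability remark is a welcome but inessential refinement.
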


\begin{proof}
    The proof is straightforward noticing that, in model \eqref{asset_price_bs_model}, almost surely, $\big(\mathbb{R}_{+}^{*}\big)^d \ni (s_0^1,\ldots,s_0^d) \mapsto g\big(S_T^1,\ldots,S_T^d\big)$ is a convex function.
\end{proof}

The preceding proposition suggests that the function $\big(\mathbb{R}_{+}^{*}\big)^d \ni (s_0^1,\ldots,s_0^d) \mapsto v(s_0^1,\ldots,s_0^d)$ may be approximated by our \emph{Convex Network}. In light of the general training procedure of our \emph{Convex Network} described above, we need to train a network $f_{n, L}(\cdot; \bm{\mathcal{W}})$ (of the form \eqref{icnn_base_max} or \eqref{icnn_lse}) in a way that it minimizes the following loss function:
\begin{equation}
    \label{loss_icnn_approx_vanilla}
    \underset{\bm{\mathcal{W}}}{\inf} \hspace{0.1cm} \Big\| v(s_0^1,\ldots,s_0^d) - f_{n, L}(s_0^1,\ldots,s_0^d; \bm{\mathcal{W}}) \Big\|_{\mathbb{L}_{\mathbb{R}^d}^2(\mu)}, \quad \text{where} \quad (s_0^1,\ldots,s_0^d) \sim \mu,
\end{equation}

\noindent
where the choice of the input distribution $\mu$ is discussed latter.

Solving the preceding optimization problem requires to compute the basket option price $v(s_0^1,\ldots,s_0^d)$ given $\big(\mathbb{R}_{+}^{*}\big)^d \ni (s_0^1,\ldots,s_0^d)$. However, we do not have closed forms for the expectation in \eqref{price_option_vanille} with the payoff function \eqref{payoff_basket_call}. Thus, we replace the preceding expectation by its Monte Carlo counterpart and in order to improve the Monte Carlo procedure, we use a control variate. That is, $v(s_0^1,\ldots,s_0^d)$ in \eqref{loss_icnn_approx_vanilla} is practically replaced with the controlled estimator (see \cite{DINGEC2013421, Pagès2018}):
\begin{equation}
    \label{estimator_with_cv_basket_call}
    \hat{v}_M(s_0^1,\ldots,s_0^d) = \frac{e^{-rT}}{M} \sum_{m = 1}^{M} \Bigg[\sum_{i = 1}^d \Big(\alpha_i S_T^{i, (m)} - K\Big)_{+} - \Big(e^{\sum_{i=1}^d \alpha_i \log \big( S_T^{i, (m)} \big)} - K \Big)_{+}  \Bigg] + \Pi_0,
\end{equation}

\noindent
where given a distribution $\mu$ and $i \in \{1,\ldots,d\}$, $\big(S_T^{1, (m)},\ldots, S_T^{d, (m)}\big)$ are \emph{i.i.d.} copies of $\big(S_T^{1},\ldots, S_T^{d}\big)$. $\Pi_0$ is given by:
$$\Pi_0 := e^{-rT}\mathbb{E}\Bigg[ \Big(e^{\sum_{i=1}^d \alpha_i \log \big( S_T^{i} \big)} - K \Big)_{+} \Bigg] = e^{a + \frac{1}{2}\sigma^2} \Phi(\kappa + \sigma) - K \Phi(\kappa),$$

\noindent
where $\Phi$ is the cumulative distribution function of the standard Normal distribution. Considering the notations $\alpha := (\alpha_1,\ldots,\alpha_d)^{*}, \Sigma := \big[\sigma_i \rho_{i,j}\sigma_j  \big]_{1 \le i,j \le d}$, we also have:
$$\sigma^2 = T \alpha^{*}\Sigma \alpha, \quad a = \sum_{i = 1}^d \alpha_i \Big(\log(s_0^i) + \big(r-\frac{\sigma_i^2}{2}\big)T \Big), \quad \kappa = \frac{a - \log(K)}{\sigma}.$$

\noindent
Recall that, we assumed $\delta_i = 0$ in \eqref{bs_model}. Besides, note that $M$ has to be chosen sufficiently large in order to train the \emph{Convex Network} on, as accurate as possible, data. Now, let us discuss the choice of the training distribution.

\subsubsection*{Practitioner's corner: Choice of $\mu$}
Let $(x_0^1,\ldots,x_0^d) \in (\mathbb{R}_{+}^{*})^d$ and consider $R > 0$ sufficiently large. Our aim is to approximate the convex function $\mathcal{B}\big( (x_0^1,\ldots,x_0^d), R_d\big) \ni (y_0^1,\ldots,y_0^d) \mapsto v(y_0^1,\ldots,y_0^d)$, where $R_d = \sqrt{d} R$. On top of that, we want to prove that, using our \emph{Convex Network} which is convex by nature, allows to accurately reconstitute the whole price curve $\mathcal{B}\big( (x_0^1,\ldots,x_0^d), R_d\big) \ni (y_0^1,\ldots,y_0^d) \mapsto v(y_0^1,\ldots,y_0^d)$ by just learning on few points in the ball $\mathcal{B}\big( (x_0^1,\ldots,x_0^d), R_d\big)$. To this end, we set the initial distribution $\mu$ as follows:
\begin{equation}
    \label{n_strat_s0_input}
    (s_0^1,\ldots,s_0^d) \underset{i.i.d.}{\sim}(x_0^1,\ldots,x_0^d) - R  + 2R \cdot \big(U_1, \ldots, U_d\big) \in \prod_{i=1}^d \big[x_0^i - R, x_0^i + R \big],
\end{equation}
where $(U_i)_{1 \le i \le d}$ is an \emph{i.i.d.} sequence of random variables following the uniform distribution $\mathcal{U}\big([0,1]\big)$. The pseudo code of our training procedure is described in Algorithm \ref{algo_icnn_basket_option}.

\begin{algorithm}
    \SetKwFunction{isOddNumber}{isOddNumber}
    \SetKwInOut{KwIn}{Input}
    \SetKwInOut{KwOut}{Output}
    \SetKwRepeat{Do}{do}{while}%

    \KwIn{
        \begin{itemize}[noitemsep]
            \item Model parameters: $d, r, \sigma_i, T, \rho_{i, j}$.
            \item Other settings: $K, \alpha_i, M, \mu, L, n, R$.
            \item Batch size: $B$, number of iterations: $I$.
        \end{itemize}
    }

    \KwOut{$\mathcal{B}\big( (x_0^1,\ldots,x_0^d), R_d\big) \ni (y_0^1,\ldots,y_0^d) \mapsto v(y_0^1,\ldots,y_0^d)$.}

    \vspace{0.2cm}
    \tcc{Initialize weights $\bm{\mathcal{W}}$.}
    
    \vspace{0.2cm}
    \For{$i\gets1$ \KwTo $I$}{
        Generate $B$ \emph{i.i.d.} input data $(s_0^{1, (b)}, \ldots, s_0^{d, (b)}) \sim \mu$ for $b \in \{1,\ldots, B\}$ using \eqref{n_strat_s0_input}.

        \vspace{0.1cm}
        For each input data $b \in \{1,\ldots, B\}$, simulate an \emph{i.i.d.} $M$-sample $\big(S_T^{1, (b,m)},\ldots, S_T^{d, (b,m)}\big)_{m \in \{1,\ldots, M\}}$ of the random vector $\big(S_T^{1},\ldots, S_T^{d}\big)$ starting at $\big(s_0^{1, (b)}, \ldots, s_0^{d, (b)}\big)$. Then, compute $\hat{v}_M(s_0^{1, (b)}, \ldots, s_0^{d, (b)})$ using \eqref{estimator_with_cv_basket_call}.

        \vspace{0.1cm}
        Update weights $\bm{\mathcal{W}}$ by using Stochastic Gradient Descent to solve:
        $$\underset{\bm{\mathcal{W}}}{\inf} \hspace{0.1cm} \frac{1}{B} \sum_{b = 1}^B \Big(\hat{v}_M\big(s_0^{1, (b)}, \ldots, s_0^{d, (b)}\big) - f_{n, L}\big(s_0^{1, (b)}, \ldots, s_0^{d, (b)}; \bm{\mathcal{W}}\big) \Big)^2.$$
    }

    \KwRet{$f_{n,L}(y_0^1,\ldots,y_0^d; \bm{\mathcal{W}})$ for all $(y_0^1,\ldots,y_0^d) \in \mathcal{B}\big((x_0^1,\ldots,x_0^d), R_d\big)$.}
    \caption{\emph{Convex Network} training: Pricing of a Basket call option.}
\label{algo_icnn_basket_option}
\end{algorithm}

\subsection{Optimal stopping}
Consider a discrete time grid $0 = t_0 < t_1 <\ldots< t_N = T$ and the $\mathbb{R}^d$-valued Markov chain $\big(X_{t_k}\big)_{0 \le k \le N} := \big(S_{t_k}^1, \ldots, S_{t_k}^d\big)_{0 \le k \le N} $ satisfying $\big\| X_{t_0}\big\|_2 + \cdots + \big\| X_{t_N}\big\|_2 < + \infty$. We denote by $\big(\mathcal{F}_k\big)_{0 \le k \le N}$ the natural completed filtration of the chain. Besides, for any Borel function $f : \mathbb{R}^d  \to \mathbb{R}$, we consider the following operators formally defined by:
\begin{equation}
    \label{transition_markov_chain}
    \mathbf{P}_kf(x) := \mathbb{E}\Big[f\big(X_{t_{k+1}}\big) \big\rvert X_{t_k} = x \Big], \quad x \in \mathbb{R}^d, \quad k \in \{0,\ldots,N-1\}.
\end{equation}

We consider a stochastic game in which you can ask once and only once at some (possibly random) time $t_k$ between $0$ and $T$ to receive the payoff $g_k\big(X_{t_k}\big)$ where $g_k : \mathbb{R}^d \to \mathbb{R}$, the \emph{payoff functions}, are continuous functions with at most linear growth. We denote by $\mathcal{T}_N^{\mathcal{F}}$ the set of all $\big(\mathcal{F}_k\big)_{0 \le k \le N}$-stopping time.

In a risk neutral framework, the price one is willing to pay to participate to this game is given by the following discrete-time optimal stopping problem: 
\begin{equation}
\label{optimal_stopping_pb}
    v_0 := v_0(s_0^1,\ldots,s_0^d) = \mathbb{E}\Big[g_{\tau^*}\big(X_{\tau^*}\big)\Big] = \underset{\tau \in \mathcal{T}_N^{\mathcal{F}}}{\sup} \hspace{0.1cm} \mathbb{E}\Big[g_{\tau}\big(S_\tau^{1},\ldots,S_\tau^{d} \big) \Big],
\end{equation}
where the stopping time $\tau^*$ satisfying \eqref{optimal_stopping_pb} is called \emph{optimal stopping time}. The latter always exists in our setting.

In this paper, we focus on \emph{best-of Bermuda options}. These options allow their holder to buy, at a future date $T$, the maximum between $d$ risky assets and this, at a \emph{strike price} denoted as $K$. That is (with $r > 0$ still being the constant interest rate):
\begin{equation}
    \label{payoff_best_of_berm}
    \mathbb{R}^d \ni (s_1,\ldots,s_d) \mapsto g_k(s_1,\ldots,s_d) := e^{-r t_k}\Big(\underset{1 \le i \le d}{\max} \hspace{0.1cm} s_i - K \Big)_{+}, \quad K > 0.
\end{equation}

A first mathematical solution to this optimal stopping time problem is provided by the \emph{Snell envelope} $\big(V_k\big)_{0 \le k \le N}$ defined by the \emph{Backward Dynamic Programming Principle}:
\begin{equation}
    \label{bdpp_bermuda}
    \left\{
    \begin{array}{ll}
        V_N(X_{t_N}) = g_N\big(X_{t_N}\big),\\
        V_k(X_{t_k}) = \max\Big(g_k\big(X_{t_k}\big), \mathbb{E}\big(V_{k+1}(X_{t_{k+1}}) \big\rvert X_{t_k} \big) \Big), \quad k \in \{ 0,\ldots,N-1\}.
    \end{array}
    \right.
\end{equation}

\noindent
By a straightforward backward induction, one may shows that there exists $N+1$ \emph{value functions} which are Borel functions $v_k : \mathbb{R}^d \to \mathbb{R}$ such that, almost surely $v_k\big(X_{t_k}\big) = V_k\big(X_{t_k}\big)$, and satisfying the functional version of \eqref{bdpp_bermuda} i.e. for $x \in \mathbb{R}^d$:
\begin{equation}
    \label{bdpp_bermuda_val_func}
    \left\{
    \begin{array}{ll}
        v_N(x) = g_N(x),\\
        v_k(x) = \max\big(g_k(x), \mathbf{P}_kv_{k+1}(x) \big), \quad k \in \{ 0,\ldots,N-1\},
    \end{array}
    \right.
\end{equation}

\noindent
where $\mathbf{P}_k$ is the transition \eqref{transition_markov_chain} at time $t_k$. One also shows that 
\begin{equation}
    \tau^* = \min \big\{k: v_k\big(X_{t_k}\big) = g_k\big(X_{t_k}\big) \big\}
\end{equation}
is an optimal stopping time for this problem (in particular it is the greatest). Hence computing numerically the Snell envelope (or at least an approximation of it) yields the value of the optimal stopping problem and a way to estimate the optimal stopping time i.e. the moment at which one can exercise its right to receive the flow/payoff $g_{\tau^*}\big(X_{\tau^*}\big)$.

We now focus on the value function $\mathbb{R}^d \ni (s_1,\ldots,s_d) \mapsto v_k(s_1,\ldots,s_d)$ and aim at showing that this function is convex. To achieve this, we rely on \emph{convex ordering theory} which we now recall the definition.

\begin{definition}[Convex ordering]
Let $U, V \in \mathbb{L}^1_{\mathbb{R}^d}$ with respective distributions $\mu, \nu$. We say that $U$ is dominated for the convex order by $V$, denoted $U \preceq_{cvx} V$, if, for every convex function $f : \mathbb{R}^d \to \mathbb{R}$, one has
$$\mathbb{E}f(U) \le \mathbb{E}f(V)$$

\noindent
or, equivalently, that $\mu$ is dominated for the convex order by $\nu$ (denoted $\mu \preceq_{cvx} \nu$) if, for every convex function $f : \mathbb{R}^d \to \mathbb{R}$,
$$\int_{\mathbb{R}^d}^{} f(\xi) \,d\mu(\xi)  \le \int_{\mathbb{R}^d}^{} f(\xi) \,d\nu(\xi).$$
\end{definition}

Convex ordering-based approaches have recently been used to compare European option prices \cite{Bergenth, BergenthumRüschendorf+2008+53+72}, American option prices \cite{Pagès2016_cvx_ord_path_dep}. It has also been used to prove convexity results in case of American-style options \cite{Pagès2016_cvx_ord_path_dep}.

Without going deeper in convex ordering theory, we may still state the following proposition which may be proved by a straightforward backward induction. For an in depth study of convex ordering, we refer the reader to \cite{RePEc:bpj:strimo:v:26:y:2008:i:1:p:53-72:n:5,Bergenth,Pagès2016_cvx_ord_path_dep}. However, let us state the following generic Proposition.

\begin{Proposition}[Propagation of convexity]
\label{prop_propagation_cvx}
    Assume that the payoff functions $g_k$ are Lipschitz continuous and convex. Assume also the chain $\big(X_{t_k}\big)_{0 \le k \le N}$ propagates the convexity in the sense that for every $k \in \{0,\ldots, N-1\}$:
    \begin{equation}
        \label{hyp_prop_convexity}
        \forall \hspace{0.1cm} f:\mathbb{R}^d \to \mathbb{R}, \text{Lipschitz continuous convex}, \quad \mathbb{R}^d \ni x \mapsto \mathbf{P}_kf(x) \hspace{0.1cm} \text{is Lipschitz continuous and convex}.
    \end{equation}

\noindent
Then, for every $k \in \{0,\ldots, N\}$, the value function $v_k$ is Lipschitz continuous and convex.
\end{Proposition}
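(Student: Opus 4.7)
The plan is to proceed by backward induction on $k$, exactly as the statement of the proposition is phrased as an iterative consequence of the Backward Dynamic Programming Principle \eqref{bdpp_bermuda_val_func}. The two closure properties I need at each step are: the composition of the transition operator $\mathbf{P}_k$ with a Lipschitz convex function remains Lipschitz convex (this is \emph{exactly} the propagation hypothesis \eqref{hyp_prop_convexity}), and the pointwise maximum of two Lipschitz convex functions is again Lipschitz convex. The second fact is standard: convexity is preserved under pointwise sup/max, and if $u_1,u_2:\mathbb{R}^d\to\mathbb{R}$ are respectively $L_1$- and $L_2$-Lipschitz, then $\max(u_1,u_2)$ is $\max(L_1,L_2)$-Lipschitz, since $|\max(a_1,a_2)-\max(b_1,b_2)|\le \max(|a_1-b_1|,|a_2-b_2|)$.

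The base case is immediate: by \eqref{bdpp_bermuda_val_func}, $v_N=g_N$, which is Lipschitz continuous and convex by assumption. For the inductive step, suppose $v_{k+1}:\mathbb{R}^d\to\mathbb{R}$ is Lipschitz continuous and convex. Applying the propagation hypothesis \eqref{hyp_prop_convexity} to $f=v_{k+1}$, the function $x\mapsto \mathbf{P}_k v_{k+1}(x)$ is Lipschitz continuous and convex. Since $g_k$ is also Lipschitz continuous and convex by assumption, and since the pointwise maximum of two such functions inherits both properties (as recalled above), it follows that
\begin{equation*}
v_k(x) \;=\; \max\bigl(g_k(x),\,\mathbf{P}_k v_{k+1}(x)\bigr)
\end{equation*}
is Lipschitz continuous and convex, which closes the induction.

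There is really no hard step here: the whole substance has been bundled into the two hypotheses (Lipschitz-convexity of the payoffs $g_k$ and the structural assumption \eqref{hyp_prop_convexity} on the Markov kernel $\mathbf{P}_k$). The only thing one should be a little careful about is the Lipschitz bookkeeping when iterating $N$ times, since each application of $\mathbf{P}_k$ and each $\max$ may a priori inflate the Lipschitz constant; but in the statement we only need existence of \emph{some} Lipschitz constant at each finite $k$, so the induction carries through without any quantitative control being required. If one wanted the proposition applied to best-of Bermudan options as in \eqref{payoff_best_of_berm}, it would remain only to check separately that the Black--Scholes chain $(X_{t_k})$ satisfies \eqref{hyp_prop_convexity}, which is the convex-ordering content referenced in \cite{Pagès2016_cvx_ord_path_dep} and lies outside the scope of this proposition.
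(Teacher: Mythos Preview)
Your proof is correct and matches the paper's approach exactly: the paper does not spell out a proof but simply notes that the proposition ``may be proved by a straightforward backward induction,'' which is precisely what you have done. Your additional remarks on the Lipschitz bookkeeping and on the separate verification of \eqref{hyp_prop_convexity} for the Black--Scholes chain are accurate and appropriately flagged as outside the scope of the proposition.
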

The assumption of \emph{propagation of convexity} \eqref{hyp_prop_convexity} is mainly dependent on the Markov chain $\big(X_{t_k}\big)_{0 \le k \le N}$. This is exactly where \emph{convex ordering theory} intervenes, providing mild conditions on the Markov chain under which the assumption of \emph{propagation of convexity} holds. The following example provides with more insight on the latter claim.

\begin{example}
    Suppose $\big(X_{t_k}\big)_{0 \le k \le N}$ is an $\mathbb{R}^d$-valued (martingale) ARCH process i.e.:
    $$X_{t_{k+1}} = X_{t_k} + \sigma\big(X_{t_k}\big) Z_{k+1},$$

    \noindent
    where $(Z_k)_k$ are \emph{i.i.d.} copies of $Z \sim \mathcal{N}(0,\mathbf{I}_q)$. $\sigma : \mathbb{R}^d \to \mathbb{M}_{d, q}(\mathbb{R})$ is the volatility matrix-valued function which is assumed to be convex in the sense that there exist $O_{\lambda, x}, O_{\lambda, y} \in \mathcal{O}\big(q, \mathbb{R} \big)$ such that for any $x,y \in \mathbb{R}^d, \lambda \in [0, 1]$:
    \begin{equation}
        \label{def_convexity_matrix}
        \big(\lambda \sigma(x)O_{\lambda, x} + (1-\lambda)\sigma(y)O_{\lambda, y}\big)\big(\lambda \sigma(x)O_{\lambda, x} + (1-\lambda)\sigma(y)O_{\lambda, y}\big)^\top - \sigma \sigma^\top\big(\lambda x + (1-\lambda)y \big) \in \mathcal{S}^{+}\big(d, \mathbb{R}\big).
    \end{equation}
    Note that, in one-dimension ($d=q=1$), this definition of convexity for matrix-valued functions boils down to regular convexity of $|\sigma|$ (it sufices to set $O_{\lambda, x} = \sign(\sigma(x))$). Moreover, this notion of convexity combined with Proposition \ref{gen_radial_distri} (which uses convex ordering definition) implies that:
    $$\sigma \big(\lambda x + (1-\lambda)y \big) Z \preceq_{cvx} \big(\lambda \sigma(x)O_{\lambda, x} + (1-\lambda)\sigma(y)O_{\lambda, y}\big)Z.$$

    Let $\mathbf{P}_k$ be the transition of the Markov chain $\big(X_{t_k}\big)_{0 \le k \le N}$ at time $t_k$. Then, for any convex function $f: \mathbb{R}^d \to \mathbb{R}$, using Proposition \ref{useful_properties_cvx_ord} yields:
    \begin{align*}
        \mathbf{P}_kf(\lambda x + (1-\lambda)y) &= \mathbb{E}\Big[f\big(\lambda x + (1-\lambda)y + \sigma(\lambda x + (1-\lambda)y)Z  \Big]\\
        &\le \mathbb{E}\Big[f\big(\lambda x + (1-\lambda)y + \big(\lambda \sigma(x)O_{\lambda, x} + (1-\lambda)\sigma(y)O_{\lambda, y}\big)Z  \Big]\\
        &\le \lambda \mathbf{P}_kf(x) + (1-\lambda)\mathbf{P}_kf(y),
    \end{align*}
    where in the last inequality, we used the convexity of $f$ and the fact that $O_{\lambda, x}Z, O_{\lambda, y}Z \sim Z$. This shows a way convex ordering may be used to propagate convexity.
\end{example}

\begin{remark}
    Condition \eqref{def_convexity_matrix} may appears difficult to check at first given a matrix-valued function. But one may show that the quite general class of matrices of the form:
    $$A \cdot \diag\big(\lambda_1(x),\ldots,\lambda_d(x)\big) \cdot O, \quad A \in \mathbb{M}_{d, q}(\mathbb{R}), O \in \mathcal{O}(q, \mathbb{R}),$$

    \noindent
    where $|\lambda_i| : \mathbb{R} \to \mathbb{R}_{+}$ are all convex, is a class of convex matrix-valued functions. Besides, this class includes several models used in practice.
\end{remark}

From Proposition \ref{prop_propagation_cvx}, it turns out that, under some assumptions, among others, the convexity of the payoff functions $g_k$ (which holds for the best-of Bermuda option \eqref{payoff_basket_call}), the function $\mathbb{R}^d \ni x \mapsto \mathbf{P}_kv_{k+1}(x)$ appearing in \eqref{bdpp_bermuda_val_func} is convex so that it may be approximated by our \emph{Convex Network}. Therefore, the value functions given by \eqref{bdpp_bermuda_val_func} may be approximated by functions $\bar{v}_k^{[n]}$ defined by the following backward equation:
\begin{equation}
    \label{approx_ficitive_bdpp_bermuda_val_func}
    \left\{
    \begin{array}{ll}
        \bar{v}_N^{[n]}(x) = g_N(x),\\
        \bar{v}_k^{[n]}(x) = \max\Big(g_k(x), \Tilde{\mathbf{P}}^{\mathcal{NN}, [n]}_k v_{k+1}(x) \Big) \quad \quad k \in \{0, \ldots, N-1\},
    \end{array}
    \right.
\end{equation}
where $\Tilde{\mathbf{P}}^{\mathcal{NN}, [n]}_k v_{k+1}$ is a \emph{Convex Network} $f_{n, L,k}(\cdot; \bm{\mathcal{W}})$ (of the form \eqref{icnn_base_max} or \eqref{icnn_lse}) designed to approximate the convex function $\mathbf{P}_kv_{k+1}$. That is, weights $\bm{\mathcal{W}}$ are such that $\big\| f_{n, L,k}\big(X_{t_k}; \bm{\mathcal{W}}\big) - \mathbf{P}_kv_{k+1}(X_{t_{k+1}}\big) \big\|_2$ is minimized which is equivalent (from Pythagoras Theorem) to solve:
\begin{equation}
    \label{optim_fictive_iccn_bermuda}
    \underset{\bm{\mathcal{W}}}{\inf} \hspace{0.1cm} \Big\| f_{n, L,k}\big(X_{t_k}; \bm{\mathcal{W}}\big) - v_{k+1}(X_{t_{k+1}}\big) \Big\|_2.
\end{equation}

However, the actual function $v_{k+1}$ is unknown rendering the optimization problem \eqref{optim_fictive_iccn_bermuda} unfeasible. For practical numerical solutions, we define the following approximation of the value function:
\begin{equation}
    \label{approx_bdpp_bermuda_val_func}
    \left\{
    \begin{array}{ll}
        \Tilde{v}_N^{[n]}(x) = g_N(x),\\
        \Tilde{v}_k^{[n]}(x) = \max\Big(g_k(x), \Tilde{\mathbf{P}}^{\mathcal{NN}, [n]}_k \Tilde{v}_{k+1}^{[n]}(x) \Big),\quad \quad k \in \{0, \ldots, N-1\},
    \end{array}
    \right.
\end{equation}
where $\Tilde{\mathbf{P}}^{\mathcal{NN}, [n]}_k \Tilde{v}_{k+1}^{[n]}$ is a \emph{Convex Network} $f_{n,L,k}(\cdot; \bm{\mathcal{W}})$ (of the form \eqref{icnn_base_max} or \eqref{icnn_lse}) designed to approximate the convex function $\mathbf{P}_k \Tilde{v}_{k+1}^{[n]}$ (its convexity is proved in the same way as that of $\mathbf{P}_k v_{k+1}$ using convex ordering). That is, with weights solving this time:
\begin{equation}
    \label{optim_iccn_bermuda}
    \underset{\bm{\mathcal{W}}}{\inf} \hspace{0.1cm} \Big\| f_{n,L,k}\big(X_{t_k}; \bm{\mathcal{W}}\big) - \Tilde{v}_{k+1}^{[n]}(X_{t_{k+1}}\big) \Big\|_2.
\end{equation}

Summing up all, solving the optimal stopping time problem with our approach reduces to train $N$ \emph{Convex Networks} at times $t_k, k = 0,\ldots,N-1$ to backwardly solve \eqref{approx_bdpp_bermuda_val_func}. In case $X_0$ is deterministic as it uses to be in Financial Mathematics, only $N-1$ \emph{Convex Networks} need to be trained. Algorithm \ref{algo_icnn_bermuda_option} presents the general procedure.

\begin{algorithm}
    \SetKwFunction{isOddNumber}{isOddNumber}
    \SetKwInOut{KwIn}{Input}
    \SetKwInOut{KwOut}{Output}
    \SetKwRepeat{Do}{do}{while}%

    \KwIn{
        \begin{itemize}[noitemsep]
            \item Target initial assets prices: $(x_0^1,\ldots,x_0^d)$.
            \item Model parameters: $r, \sigma_i, T, \rho_{i, j}$.
            \item Other settings: $K, \alpha_i, M, L, n$.
            \item Batch size: $B$, number of iterations: $I$.
        \end{itemize}
    }

    \KwOut{$v(x_0^1,\ldots,x_0^d)$.}

    \vspace{0.2cm}
    \tcc{Initialize weights $(\bm{\mathcal{W}}_k)_{0 \le k \le N - 1}$ for $N$ \emph{Convex Networks}.}
    
    \vspace{0.2cm}
    \For{$i\gets1$ \KwTo $I$}{

        \vspace{0.2cm}
        Simulate $B$ \emph{i.i.d.} paths $\big(S_{t_k}^{1, (b)},\ldots, S_{t_k}^{d, (b)}\big)_{b \in \{1,\ldots, B\}, k \in \{0, \ldots, N\}}$ of $\big(S_{t_k}^{1},\ldots, S_{t_k}^{d}\big)_{\{0, \ldots, N\}}$ starting from $(x_0^1,\ldots,x_0^d)$.

        \vspace{0.1cm}
        Initialize the value function and the optimal stopping time for each path $b \in \{1,\ldots, B\}$:
        $$v_N^{(b)} = g_N\big(S_T^{1, (b)},\ldots, S_T^{d, (b)}\big), \quad \tau_N^{(b)} = N.$$
        
        \For{$k\gets{N -1}$ \KwTo $0$}{
            Compute the payoff function and estimate the continuation value via the \emph{Convex Network} for each path $b \in \{1,\ldots, B\}$:
            $$G_k^{(b)} := g_k\big(S_{t_k}^{1, (b)},\ldots, S_{t_k}^{d, (b)}\big), \quad C_k^{(b)}(\bm{\mathcal{W}}_k) = f_{n,L, k}\big(S_{t_k}^{1, (b)},\ldots, S_{t_k}^{d, (b)}; \bm{\mathcal{W}}_k\big).$$

            Update weights $\bm{\mathcal{W}}_k$ via Stochastic Gradient Descent to solve:
            $$\underset{\bm{\mathcal{W}}}{\inf} \hspace{0.1cm } \frac{1}{B} \sum_{b = 1}^B \big(C_k^{(b)}(\bm{\mathcal{W}}) -  v_{k+1}^{(b)}\big)^2.$$

            Define the decision function for each path $b \in \{1,\ldots, B\}$:
            $$D_k^{(b)} = \left\{
                                \begin{array}{ll}
                                1 \quad \text{if} \quad G_k^{(b)} \ge C_k^{(b)},\\
                                0 \quad \text{if} \quad G_k^{(b)} < C_k^{(b)}.
                                \end{array}
                            \right.$$

            Update the optimal stopping time and the value function each path $b \in \{1,\ldots, B\}$:
            $$\tau_k^{(b)} = k \cdot D_k^{(b)} + \tau_{k + 1}^{(b)} \cdot \big(1 - D_k^{(b)}\big), \quad v_k^{(b)} = G_k^{(b)} \cdot D_k^{(b)} + v_{k+1}^{(b)}\cdot \big(1 - D_k^{(b)}\big).$$
        }

        Compute estimate of the initial value:
        $$\hat{v}_0 = \frac{1}{B} \sum_{b = 1}^B v_0^{(b)}.$$
    }

    \KwRet{$\hat{v}_0$.}
    \caption{\emph{Convex Network} training: Best-of Bermuda Call option pricing.}
\label{algo_icnn_bermuda_option}
\end{algorithm}

\subsection{Stochastic Optimal Control (SOC)}
We now consider a (discrete time) \emph{Stochastic Optimal Control (SOC)} problem over a (discrete) time grid $0=t_0 < t_1 < \ldots < t_N = T$. Similar to the optimal stopping time problem studied in the previous section, this problem can be think as a stochastic game where a gambler pays an initial amount $v_0$ for the right to take actions $q_k$, at times $t_k$, and receives a reward (which can also be a loss) $g_k$. The latter reward depends on both the gambler's action and a random vector $X_{t_k}$. Here, we consider a constrained version of the game meaning the gambler's actions are restricted to $\big(q_0,\ldots, q_N\big) \in \mathcal{S}$, with $\mathcal{S}$ being the constraint space. Depending on how the set $\mathcal{S}$ is defined, it may prevent for trivial optimal strategies where the gambler may take the action whenever it suits her and do nothing otherwise. A typical and more interesting (in a modelling standpoint) case would be a set $\mathcal{S}$ designed in a way that it might compel the gambler to take an action even when it does not suit her. This framework as described embeds several well-studied real world problems \cite{BarreraEsteve2006NumericalMF, Fleming1975, Bertsekas_stoch_control}. Among these problems, we have decided to study the pricing of \emph{Take-or-Pay} or \emph{Swing} options.

A \emph{Take-or-Pay} (or \emph{Swing}) option is a commodity derivative product which enable its holder to purchase amounts of energy (generally gas delivery contract) $q_k$, at predetermined exercise dates, $t_k, k \in \{0,\ldots,N-1\}$ (thus $N$ exercise dates), until the contract maturity at time $t_N = T$. The purchase price, or \emph{strike price}, is supposed to be fixed for all dates and is denoted as $K$. The holder of the swing contract ought to have a buying/consumption strategy satisfying the following constraints:
\begin{equation}
 \label{swing_vol_constraints}
\left\{
    \begin{array}{ll}
         \underline{q} \le q_{k} \le \overline{q}, \quad k \in \{0,\ldots, N-1\},\\
        \displaystyle Q_{N} := \sum_{k = 0}^{N-1} q_{k} \in \big[\underline{Q}, \overline{Q}\big], \quad Q_0 = 0, \quad 0 \le \underline{Q} \le \overline{Q} < +\infty.
    \end{array}
\right.
\end{equation}

Denote by $F_{t_k}$ the price, at time $t_k$, of the underlying energy instrument. Typically, the latter is generally a forward gas delivery contract which is an agreement paid now for a gas delivery at a future date, allowing the buyer to lock-in the future gas price.

We suppose that there exists a $\mathbb{R}^d$-valued (discrete) Markov chain $\big(X_{t_k}\big)_{0 \le k \le N}$ on a probability space $\Big(\Omega, \big(\mathcal{F}_{k}\big)_{0 \le k \le N}, \mathbb{P}\Big)$ such that for every $k \in \{0,\ldots,N\}, F_{t_k} = f_k\big(X_{t_k}\big)$ where $f_k : \mathbb{R}^d \to \mathbb{R}$ are Borel functions with at most linear growth. $\big(\mathcal{F}_{k}\big)_{0 \le k \le N}$ denotes the natural completed filtration of the chain.

\vspace{0.1cm}
The price $v_0$ one is willing to pay for this contract is given by (we disregard interest rates):
\begin{equation}
    \label{swing_price}
    v_0 := v_0\big(x) = \underset{(q_0, \ldots, q_{N-1}) \in \mathcal{S}}{\sup} \hspace{0.1cm} \mathbb{E}\Bigg[ \sum_{k = 0}^{N-1} q_k\big(F_{t_k} - K\big) \Bigg], \quad X_{t_0} = x \in \mathbb{R}^d,
\end{equation}
where the constraint space $\mathcal{S}$ is designed in light of \eqref{swing_vol_constraints} and defined by:
\begin{equation}
    \mathcal{S} := \Bigg\{(q_k)_{0 \le k \le N-1}, \hspace{0.1cm} q_k : (\Omega, \mathcal{F}_{k}, \mathbb{P}) \mapsto [\underline{q}, \overline{q}] \quad \text{and} \quad \sum_{k = 0}^{N-1} q_{k} \in \big[\underline{Q}, \overline{Q} \big]  \Bigg\}.
\end{equation}

Pricing this contract involves finding an \emph{optimal control} which is an admissible strategy $\big(q_0,\ldots, q_{N-1}\big)$ realizing the supremum in \eqref{swing_price}. This can be approached in two ways. The first approach directly finds $v_0$ by transforming the stochastic optimization involved in \eqref{swing_price} into a parametric one, replacing the controls $q_k$ by well-chosen parametric functions (see \cite{BarreraEsteve2006NumericalMF, lemaire2023swing}). However, in this paper, we adopt a more traditional approach when it comes to \emph{SOC} problems, namely the \emph{Dynamic Programming} approach. To introduce the latter approach, the \emph{SOC} problem \eqref{swing_price} needs to be rewritten seen from a time $t_k$. That is, we dynamically consider:
\begin{equation}
    \label{swing_price_t_k}
    V_k\big(x, Q_k) = \underset{(q_k, \ldots, q_{N-1}) \in \mathcal{S}_k(Q_k)}{\sup} \hspace{0.1cm} \mathbb{E}\Bigg[ \sum_{\ell = k}^{N-1} q_\ell\big(F_{t_\ell} - K\big) \Bigg\rvert X_{t_k} = x \Bigg],
\end{equation}
where the set $\mathcal{S}_k(Q_k)$ is the constraint space seen from time $t_k$ with a cumulative energy consumption $Q_k = \sum_{i = 0}^{k-1} q_i$ (purchased energy until time $t_{k-1}$ with the assumption $Q_0 = 0$). This set is defined by:
\begin{equation}
    \label{vol_constraints_space_t_k}
    \mathcal{S}_k(Q_k) := \Bigg\{(q_{\ell})_{k \le \ell \le N-1}, \hspace{0.1cm} q_{\ell} : (\Omega, \mathcal{F}_{{\ell}}, \mathbb{P}) \mapsto [\underline{q}, \overline{q}] \quad \text{and} \quad \sum_{\ell = k}^{N-1} q_{\ell} \in \Big[(\underline{Q}-Q_k)_{+}, \overline{Q}-Q_k \Big] \Bigg\}.
\end{equation}
Before going any further, let us be more precise in the notations introduced above. The argument $x \in \mathbb{R}^d$ used in \eqref{swing_price} may be misleading since it does not (explicitly) appear in the corresponding formula. Using $x \in \mathbb{R}^d$ as argument refers to the starting value of the Markov chain $\big(X_{t_k}\big)_{0 \le k \le N}$ which drives $F_{t_k}$ appearing in the formula. This implicit notation remains valid in the rest of this section.

\vspace{0.2cm}
Under mild assumptions, we may show that (see \cite{BarreraEsteve2006NumericalMF, Bardou2007WhenAS}) there exist \emph{value functions} $v_k$ coinciding with $V_k$ satisfying the following \emph{backward dynamic programming equation}:
\begin{equation}
 \label{bdpp_swing}
\left\{
    \begin{array}{ll}
        v_{N-1}(x, Q) = \underset{q \in \Gamma_{N-1}(Q)}{\sup} \hspace{0.1cm} q\big(f_{{N-1}}(x) - K\big)\\
        v_{k}(x, Q) = \underset{q \in \Gamma_{k}(Q)}{\sup} \hspace{0.1cm} \Big[q\big(f_{{k}}(x) - K\big) + \big(\mathbf{P}_kv_{k+1}(\cdot, Q+q) \big) (x)  \Big], \quad k \in \{0, \ldots, N-2\},
    \end{array}
\right.
\end{equation}

\noindent
where for every $k \in \{0,\ldots, N-1\}$, $\Gamma_k(Q)$ is the set (in fact a closed interval) of admissible / possible actions at time $t_k$ given that the holder of the contract bought $Q = \sum_{i = 0}^{k-1}$ from the inception of the contract until time $t_{k-1}$ (included). We refer the reader to \cite{lemaire2023swing} for details on this set. Besides, for $k \in \{0,\ldots, N-1\}$, the transition $\mathbf{P}_k$ is formally defined in \eqref{transition_markov_chain}.

As for the optimal stopping time problem, we may use convex ordering theory to prove the convexity of the value function defined in $\eqref{bdpp_swing}$. Especially, assuming a martingale \emph{ARCH} dynamics for the Markov chain $\big(X_{t_k}\big)_{0 \le k \le N}$ and under mild assumptions, it has been shown (see \cite{pagès2024convexorderingstochasticcontrol}) that for every $k$, $\mathbb{R}^d \ni x \mapsto \big(\mathbf{P}_kv_{k+1}(\cdot, Q)\big)(x)$ is convex. Thus, the latter function may be approximated by our \emph{Convex Networks} of the form \eqref{icnn_base_max} or \eqref{icnn_lse}.

We proceed as for the optimal stopping time problem, replacing the convex function $\mathbb{R}^d \ni x \mapsto \big(\mathbf{P}_kv_{k+1}(\cdot, Q)\big)(x)$ by our \emph{Convex Network}. For every $k$, the actual value function $v_k$ is approximated with $\Tilde{v}_k^{[n]}$ backwardly defined by:
\begin{equation}
 \label{bdpp_swing_approx_icnn}
\left\{
    \begin{array}{ll}
        \Tilde{v}_{N-1}^{[n]}(x, Q) = \underset{q \in \Gamma_{N-1}(Q)}{\sup} \hspace{0.1cm} q\big(f_{{N-1}}(x) - K\big)\\
        \Tilde{v}_{k}^{[n]}(x, Q) = \underset{q \in \Gamma_{k}(Q)}{\sup} \hspace{0.1cm} \Big[q\big(f_{{k}}(x) - K\big) + \Tilde{\mathbf{P}}^{\mathcal{NN}, [n]}_k \big(\Tilde{v}_{k+1}^{[n]}(\cdot, Q+q)\big)(x)  \Big], \quad k \in \{0, \ldots, N-2\},
    \end{array}
\right.
\end{equation}
where for every $k \in \{0,\ldots,N-2\}$, $\Tilde{\mathbf{P}}^{\mathcal{NN}, [n]}_k \big(\Tilde{v}_{k+1}^{[n]}(\cdot, Q)\big)$ is a \emph{Convex Network} $f_{n, L, k}^{(Q)}(\cdot; \bm{\mathcal{W}})$ of the form \eqref{icnn_base_max} or \eqref{icnn_lse} such that it solves the following minimization problem:
\begin{equation}
    \label{optim_iccn_swing}
    \underset{\bm{\mathcal{W}}}{\inf} \hspace{0.1cm} \Big\| f_{n, L, k}^{(Q)}\big(X_{t_k}; \bm{\mathcal{W}}\big) - \Tilde{v}_{k+1}^{[n]}(X_{t_{k+1}}, Q\big) \Big\|_2.
\end{equation}

\subsubsection*{Computational efficiency}
Solving \eqref{bdpp_swing_approx_icnn} at a time $t_k$ requires training one \emph{Convex Network} for each possible $Q$ at that time. This may trigger a computationally inefficiency when there are several dates ($N$ is large) and several possible $Q$. An attempt to solve this inefficiency would be to consider $\big(X_{t_k}, Q\big)$ as the state variable. However, this requires to be able to simulate the latter pair whereas we do not have their joint distribution. An efficient approach using \emph{multitask neural networks} (see \cite{yeo2024deepmultitaskneuralnetworks}) has been introduced to address this issue. However, the latter approach solves the \emph{SOC} problem by approximating the optimal action / control rather than the value function, as in our approach. Thus, we can not directly apply this approach and have decided not to focus on this issue which is out of the scope of this paper since it requires a study in its own right. However, one may still improve the complexity of the algorithm by solving \eqref{optim_iccn_swing} at each date $t_k$, for a \q{few} number of reachable $Q$, interpolating the value function when needed.

\subsubsection*{Implementation subtlety}
In light of the preceding discussion on the algorithm's efficiency, one needs to train one \emph{Convex Network} for each time $t_k$, and each cumulative purchased volume $Q \in \mathcal{Q}_k$, where the set $\mathcal{Q}_k$ denotes a finite subset of the set of all reachable cumulative purchased volumes at time $t_k$.

As pointed out in prior works \cite{lemaire2023swing, yeo2024deepmultitaskneuralnetworks}, clearly make the dependence of the value function (or the optimal control) to the cumulative purchased volume could help improve the training. Thus, for each date $t_k, k \in \{0,\ldots,N-1\}$, we consider a modified network as follows (for $Q \in \mathcal{Q}_k$ and $type \in \{\lambda, \max\}$):
\begin{equation}
\label{net_icnn_swing}
    \mathbb{R}^d \ni x \mapsto f_{n, L}^{type}(x, Q; \bm{\mathcal{W}}) := \underbrace{\phi_n^{type} \circ a_L^{\mathcal{W}_L} \circ a_{L-1}^{\mathcal{W}_{L-1}} \circ \cdots \circ a_1^{\mathcal{W}_1}(x)}_{\text{Convex Part}} + \underbrace{\kappa \cdot \mathcal{L}(Q)}_{\text{Adjustment Part}},
\end{equation}
where,
\begin{equation}
    \label{linear_transform_Q}
    \mathcal{L}(Q) := \frac{Q- \underline{Q}}{\overline{Q} - \underline{Q}} \mathbf{1}_{\underline{Q} \neq \overline{Q}} + \frac{Q- \underline{Q}}{\overline{Q}} \mathbf{1}_{\underline{Q} = \overline{Q}},
\end{equation}
and $\kappa \in \mathbb{R}$ is a trainable parameter. The \emph{Convex Part} in the preceding network corresponds to the Convex Network we have studied until now (see \eqref{icnn_base_max} and \eqref{icnn_lse}). The \emph{Adjustment Part}, as its name indicates, allows to adjust the hyperplanes learned as a function of a given cumulative purchased volume. Since this latter part does not depend on our variable of interest, namely $x$, the network built in \eqref{net_icnn_swing} remains convex in $x$. Besides, the linear transformation $\mathcal{L}$ appearing in the \emph{Adjustment Part} comes from studies performed in \cite{lemaire2023swing, yeo2024deepmultitaskneuralnetworks}.

\section{Numerical Experiments}
\label{section_3}
We numerically demonstrate the effectiveness of our approach in the pricing of the three options studied in this paper. To achieve this, we compare our methodology to the well-established ones for each option.

\subsection{Toy example: Noisy convex function}
Before diving into options pricing, let us consider a simple learning problem. Assume that we want to learn a convex function $I \ni x \mapsto f(x)$ given a real interval $I$. To this end, we have at our disposal a training dataset $\big(x_i, g(x_i)\big)_{i \in \{1,\ldots, N\}}$ with $x_i \in I$, and where $g$ is the true function $f$ plus a noise i.e.:
\begin{equation}
    \label{func_g_toy}
    g(x_i) := f(x_i) + \sigma_{\xi} \cdot \xi_i, \quad \xi_i \underset{i.i.d.}{\sim} \mathcal{N}(0,1), \hspace{0.1cm} \sigma_{\xi} > 0.
\end{equation}
In practice, $f(x_i)$ may represents an expectation, and noises $\xi_i$ may come from the Monte Carlo procedure used to compute that expectation as it will be the case in our option pricing problem we described earlier in this paper.

We consider a setting, where the convex function $f$ is given by:
\begin{equation}
    \label{1d-convex-func}
    f(x) := x^2 + 10 \big((e^x - 1) \cdot \mathbf{1}_{x < 0} + x \cdot \mathbf{1}_{x > 0}\big).
\end{equation}
We set $I = [-7,7]$ and $\sigma_\xi = 2$ in this section. The graph of function $f$ is represented in Figure \ref{toy_1d_convex_func} as well as the simulated noisy output $g$ given by the function $f$ plus $\emph{i.i.d.}$ Gaussian noises for each observation $x_i$.

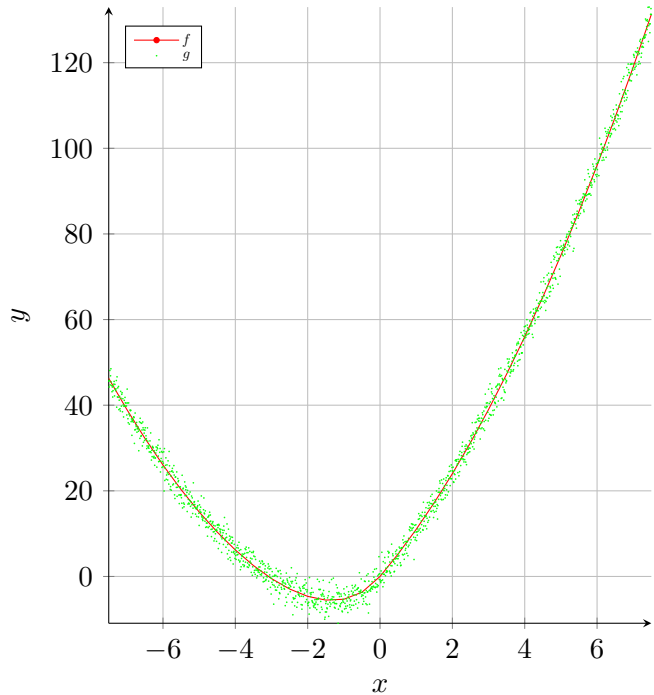
\begin{figure}[ht!]
\centering
\begin{tikzpicture}
\begin{axis}[
    grid=major,
    axis lines = left,
    xlabel = \(x\),
    ylabel = {\(y\)},
    width=0.5\linewidth,
    height=0.35\paperheight,
    legend pos =  north west,
    legend style={nodes={scale=0.5, transform shape},  legend image post style={mark=*}}
]
\addplot[color=red, mark size=1pt] %
	table[x=obs,y=f,col sep=comma]{data/1d_convex_func.csv};
\addlegendentry{$f$};
\addplot[only marks, color = green, mark size=0.1pt]%
	table[x=obs,y=g, col sep=comma]{data/1d_convex_func_sim.csv};
\addlegendentry{$g$};
\end{axis}
\end{tikzpicture}
\caption{\textit{Toy example: convex function $f$ with its noisy version $g$.}}
\label{toy_1d_convex_func}
\end{figure}

We then train our \emph{Convex Network} to learn the true convex function $I \ni x \mapsto f(x)$, given observed outputs $g(x_i)$ for points $x_i \in I$. This is performed by building a network of the form \eqref{icnn_base_max} or \eqref{icnn_lse}, and train it in a way that it minimizes the mean squared error to the output $g$. For the training, we use $n=32$ units per layer, and 100 batches of size $4096$ of points $x_i$ uniformly drawn from $I$. We also use Adam as optimizer and the coefficient $c$ (in Remark \ref{choix_lambda}) is set to 10. Besides, for numerical performance, we scale the input data using the transformation:
\begin{equation}
    \label{scaling_x}
    \Tilde{x}_i := \frac{x_i + 7}{14}.
\end{equation}
To set the learning rate, we follow the following schedule rule ($i$ denoting the iterations):
\begin{equation}
    \label{schedule_lr}
    \gamma_i = 1e^{-3} \quad \text{for} \quad i \le 100 \quad \text{and} \quad \gamma_i = \max\big(1e^{-5}, 0.95  \gamma_{i-1}\big) \quad \text{for} \quad i > 100.
\end{equation}

\noindent
This scheduling is due to the fact that we observed a big variability in the loss for large number of iterations and for the \q{Scrambling} versions of our \emph{Convex Network}. The training loss functions are depicted in Figure \ref{toy_iccn_appr_loss}.

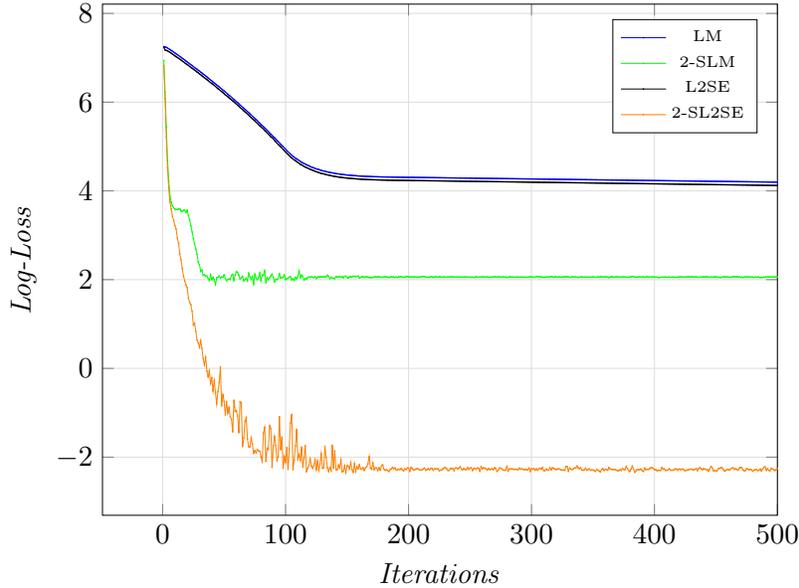
\begin{figure}[ht!]
\centering
\begin{tikzpicture}
\begin{axis}[
	xlabel= \textit{Iterations},
	ylabel=\textit{Log-Loss},
	grid=both,
        xmax = 500,
	minor grid style={gray!25},
	major grid style={gray!25},
	width=0.6\linewidth,
	height=0.3\paperheight,
	line width=0.1pt,
	mark size=0.1pt,
	legend pos =  north east,
        legend style={font=\tiny}
	]
\addplot[color=blue, mark=*] %
	table[x=iter,y=lm,col sep=comma]{data/toy_loss.csv};
\addlegendentry{LM};
\addplot[color=green, mark=*] %
	table[x=iter,y=2slm,col sep=comma]{data/toy_loss.csv};
\addlegendentry{2-SLM};
\addplot[color=black, mark=*] %
	table[x=iter,y=l2se,col sep=comma]{data/toy_loss.csv};
\addlegendentry{L2SE};
\addplot[color=orange, mark=*] %
	table[x=iter,y=2sl2se,col sep=comma]{data/toy_loss.csv};
\addlegendentry{2-SL2SE};
\end{axis}
\end{tikzpicture}
\caption{\textit{Toy example: approximation of a noisy convex function by our Convex Networks.}}
\label{toy_iccn_appr_loss}
\end{figure}
Figure \ref{toy_iccn_appr_loss} highlights the power of combining a \emph{Scrambled} network with the regularization of the maximum function via the \emph{LogSumExp} function. The \emph{2-ScrambledLogSumExp} (2-SL2SE) network appears to outperform the other networks, with a more rapid decrease in the loss function. This is further demonstrated in the test phase (see Figure \ref{toy_iccn_approx}, where the true function $f$ is compared to its approximation by our \emph{Convex Network}), where we evaluate the model using 100 uniformly drawn points from $I$, different from those used during training phase. The results show that our \emph{Convex Network} effectively recovers the true convex function $f$.
\newpage

\begin{figure}[ht!]
\centering
\begin{tikzpicture}
\begin{axis}[
    grid=major,
    axis lines = left,
    xlabel = \(x\),
    ylabel = {\(y\)},
    width=0.45\linewidth,
    height=0.25\paperheight,
    legend pos =  north west,
]
\addplot[color=red, mark size=1pt] %
	table[x=obs,y=f,col sep=comma]{data/toy_loss_f_hat_non_scale.csv};
\addlegendentry{$f$};
\addplot[color=blue, mark size=1pt] %
	table[x=obs,y=f_hat_2sl2se,col sep=comma]{data/toy_loss_f_hat_non_scale.csv};
\addlegendentry{$2-SL2SE$};
\end{axis}
\end{tikzpicture}
~\vspace{0.4cm}
\begin{tikzpicture}
\begin{axis}[
    grid=major,
    axis lines = left,
    xlabel = \(x\),
    ylabel = {Relative error},
    width=0.45\linewidth,
    height=0.25\paperheight,
    enlarge y limits=true,
	]
\addplot[color=green, mark=*] %
	table[x=obs,y=diff,col sep=comma]{data/toy_loss_f_hat_non_scale.csv};
\end{axis}
\end{tikzpicture}
\caption{\textit{Toy example: 2-SL2SE approximation of a noisy convex function (on the left). The relative error is given on the right graphic. The relative error is given by $\frac{|\hat{f} - f|}{f}$, where $\hat{f}$ is the approximation of $f$ by the network 2-SL2SE. The formula of the relative error explains why the error is large for value of $f$ close to 0.}}
\label{toy_iccn_approx}
\end{figure}
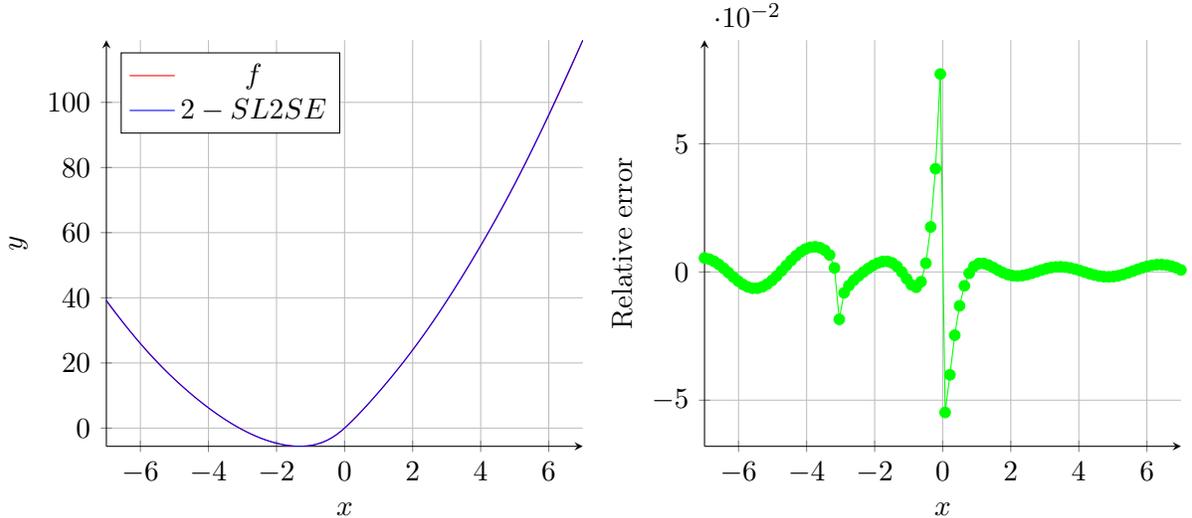

\subsection{Basket call option}
For the first option pricing problem, we consider a basket call option (with payoff \eqref{payoff_basket_call}) along with a multidimensional Black \& Scholes model \eqref{asset_price_bs_model}. For this derivative product, we compare prices given by our \emph{Convex Network} to those given by the improved Monte Carlo procedure \eqref{estimator_with_cv_basket_call} with a sample of size $M = 10^6$. Disregarding the chosen \emph{Convex Network} architecture, we set the number of units per layer to $n = 32$. For the training procedure, we use a sample of size 38400 with batches of size 64, and Adam as optimizer. We also set $R = 20$ along with the input distribution discussed in \eqref{n_strat_s0_input}. That is, in the training procedure of our \emph{Convex Network}, we learn prices of basket call options as a function of the starting price $(s_0^1,\ldots,s_0^d)$ of the underlying asset which lies within $\mathcal{I}(x_0, R) := \prod_{i=1}^d \big[x_0^i - R, x_0^i + R\big]$ for some $x_0 := (x_0^1,\ldots,x_0^d) \in \big(\mathbb{R}_{+}^*\big)^d$. Then, in the test phase, we want to show that, even by learning from few points (here 38400 points) in the set $\mathcal{I}(x_0, R)$, our \emph{Convex Network}, by filling the convex price's curve, can accurately retrieve prices of basket options, where the underlying asset price starts from any value of $\mathcal{I}(x_0, R)$, especially for points not already seen in the training phase.

To set the learning rate, we follow the schedule rule described in \eqref{schedule_lr}. Besides, the coefficient $c$ driving the parameter $\lambda$ in the \emph{LogSumExp} function (see discussion in Remark \ref{choix_lambda}) is set to 20 for $d\le 2$ and to 40 otherwise.

For the underlying asset, we consider the following settings: $x_0^i = 100 - i, \sigma_i = 0.2 + 0.008 i, \alpha_i = 1/d, \delta_i = 0$ for $i \in \{1,\ldots, d\},  r = 0.06, K = 80, t = 0.5$ (6 months). The instantaneous correlation is set to $\rho_{i, j} = \rho \mathbf{1}_{i \neq j} + \mathbf{1}_{i=j}$ with $\rho \in \big(-\frac{1}{d - 1}, 1\big)$. The loss function during the training phase is depicted in Figures \ref{comparaison_loss_par_arch_zero_corr}, \ref{comparaison_loss_par_arch_non_zero_corr}, where the testing phase is performed on $3 \cdot 10^6$ points randomly chosen in $\mathcal{I}(x_0, R)$. 

\begin{figure}[!h]
\centering
\begin{tikzpicture}
\begin{axis}[
	xlabel= \textit{Iterations},
	ylabel=\textit{Log-Loss},
        xmax = 250,
	grid=both,
	minor grid style={gray!25},
	major grid style={gray!25},
	width=0.5\linewidth,
	height=0.25\paperheight,
        mark= star,
	line width=0.1pt,
	mark size=0.1pt,
	legend pos =  north east,
        legend style={font=\tiny}
	]
\addplot[color=blue, mark=*] %
	table[x=iter,y=lm,col sep=comma]{data/loss_comp_vanilla_dim_2.csv};
\addlegendentry{LM};
\addplot[color=green, mark=*] %
	table[x=iter,y=2slm,col sep=comma]{data/loss_comp_vanilla_dim_2.csv};
\addlegendentry{2-SLM};
\addplot[color=black, mark=*] %
	table[x=iter,y=l2se,col sep=comma]{data/loss_comp_vanilla_dim_2.csv};
\addlegendentry{L2SE};
\addplot[color=orange, mark=*] %
	table[x=iter,y=2sl2se,col sep=comma]{data/loss_comp_vanilla_dim_2.csv};
\addlegendentry{2-SL2SE};
\end{axis}
\end{tikzpicture}%
~
\begin{tikzpicture}
\begin{axis}[
	xlabel= \textit{Iterations},
	ylabel=\textit{Log-Loss},
	grid=both,
        xmax = 250,
	minor grid style={gray!25},
	major grid style={gray!25},
	width=0.5\linewidth,
	height=0.25\paperheight,
	line width=0.1pt,
	mark size=0.1pt,
	legend pos =  north east,
        legend style={font=\tiny}
	]
\addplot[color=blue, mark=*] %
	table[x=iter,y=lm,col sep=comma]{data/loss_comp_vanilla_dim_5.csv};
\addlegendentry{LM};
\addplot[color=green, mark=*] %
	table[x=iter,y=2slm,col sep=comma]{data/loss_comp_vanilla_dim_5.csv};
\addlegendentry{2-SLM};
\addplot[color=black, mark=*] %
	table[x=iter,y=l2se,col sep=comma]{data/loss_comp_vanilla_dim_5.csv};
\addlegendentry{L2SE};
\addplot[color=orange, mark=*] %
	table[x=iter,y=2sl2se,col sep=comma]{data/loss_comp_vanilla_dim_5.csv};
\addlegendentry{2-SL2SE};
\end{axis}
\end{tikzpicture}
\caption{\textit{Logarithm of the training losses per Convex Network architecture with $\rho = 0$ ($d = 2$ on the left and $d=5$ on the right).}}
\label{comparaison_loss_par_arch_zero_corr}
\end{figure}
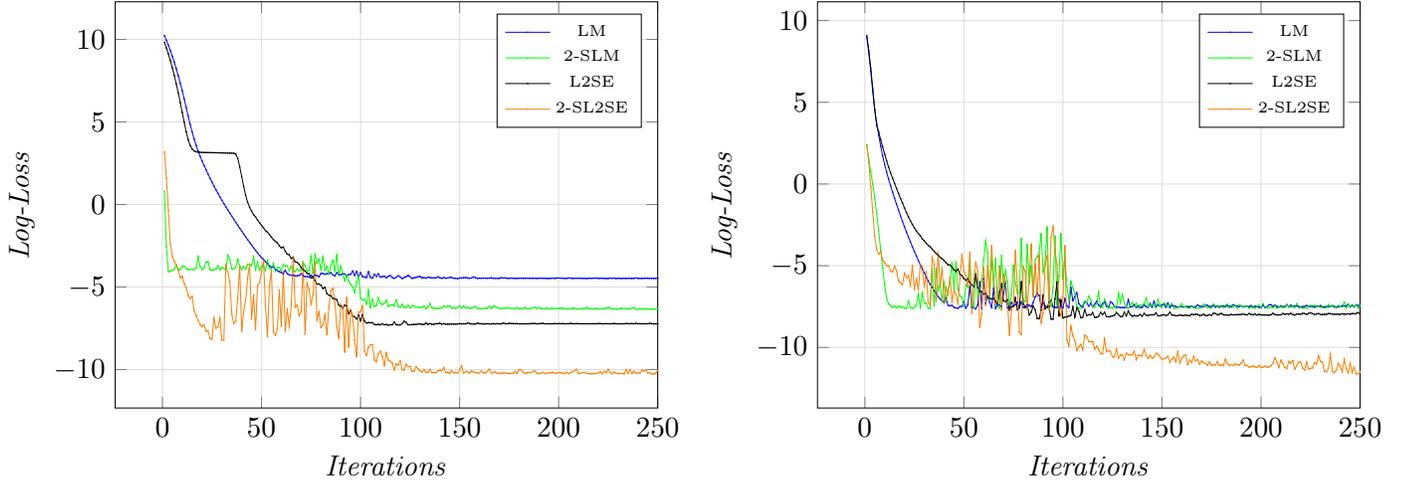

\newpage
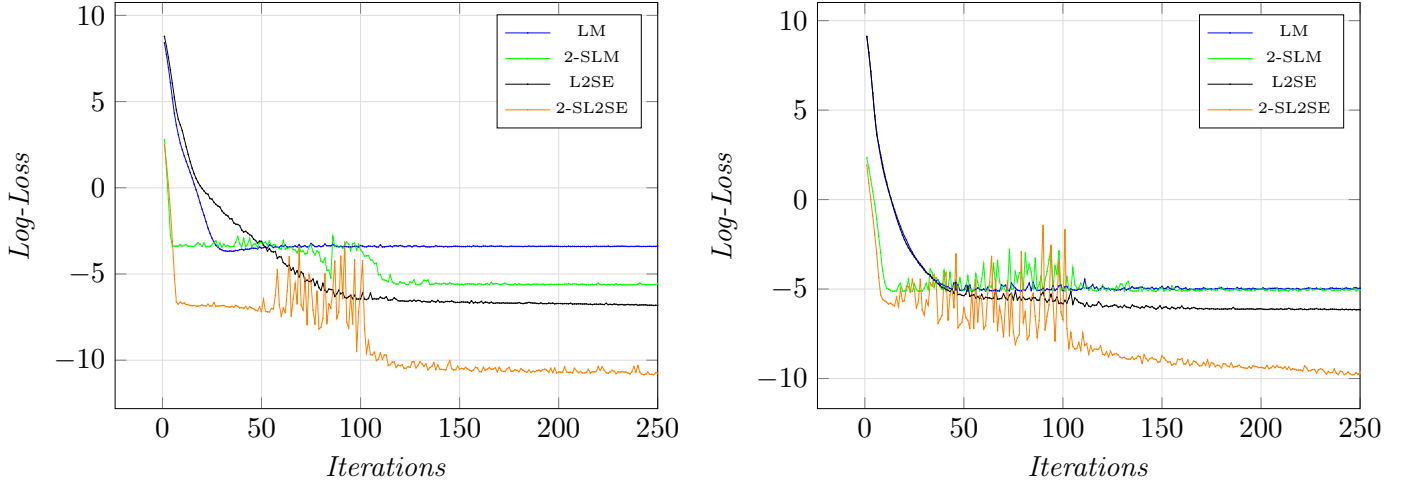
\begin{figure}[!h]
\centering
\begin{tikzpicture}
\begin{axis}[
	xlabel= \textit{Iterations},
	ylabel=\textit{Log-Loss},
	grid=both,
        xmax = 250,
	minor grid style={gray!25},
	major grid style={gray!25},
	width=0.5\linewidth,
	height=0.25\paperheight,
        mark= star,
	line width=0.1pt,
	mark size=0.1pt,
	legend pos =  north east,
        legend style={font=\tiny}
	]
\addplot[color=blue, mark=*] %
	table[x=iter,y=lm,col sep=comma]{data/loss_comp_vanilla_dim_2_corr.csv};
\addlegendentry{LM};
\addplot[color=green, mark=*] %
	table[x=iter,y=2slm,col sep=comma]{data/loss_comp_vanilla_dim_2_corr.csv};
\addlegendentry{2-SLM};
\addplot[color=black, mark=*] %
	table[x=iter,y=l2se,col sep=comma]{data/loss_comp_vanilla_dim_2_corr.csv};
\addlegendentry{L2SE};
\addplot[color=orange, mark=*] %
	table[x=iter,y=2sl2se,col sep=comma]{data/loss_comp_vanilla_dim_2_corr.csv};
\addlegendentry{2-SL2SE};
\end{axis}
\end{tikzpicture}%
~
\begin{tikzpicture}
\begin{axis}[
	xlabel= \textit{Iterations},
	ylabel=\textit{Log-Loss},
	grid=both,
        xmax = 250,
	minor grid style={gray!25},
	major grid style={gray!25},
	width=0.5\linewidth,
	height=0.25\paperheight,
	line width=0.1pt,
	mark size=0.1pt,
	legend pos =  north east,
        legend style={font=\tiny}
	]
\addplot[color=blue, mark=*] %
	table[x=iter,y=lm,col sep=comma]{data/loss_comp_vanilla_dim_5_corr.csv};
\addlegendentry{LM};
\addplot[color=green, mark=*] %
	table[x=iter,y=2slm,col sep=comma]{data/loss_comp_vanilla_dim_5_corr.csv};
\addlegendentry{2-SLM};
\addplot[color=black, mark=*] %
	table[x=iter,y=l2se,col sep=comma]{data/loss_comp_vanilla_dim_5_corr.csv};
\addlegendentry{L2SE};
\addplot[color=orange, mark=*] %
	table[x=iter,y=2sl2se,col sep=comma]{data/loss_comp_vanilla_dim_5_corr.csv};
\addlegendentry{2-SL2SE};
\end{axis}
\end{tikzpicture}
\caption{\textit{Logarithm of the training losses per Convex Network architecture with $\rho = 0.4$ ($d = 2$ on the left and $d=5$ on the right).}}
\label{comparaison_loss_par_arch_non_zero_corr}
\end{figure}

Overall, we observe that, disregarding the problem dimension, the \textit{2-SL2SE} network outperforms the others, with a faster decrease in the loss function. In high dimension, the network \textit{2-SL2SE} distinguished itself from the other networks, which exhibit similar performance.

\vspace{0.2cm}
To assess, the accuracy of our method, following its performance, we take the \textit{2-SL2SE} architecture and consider the following five test cases: $s_0 = (85-i + 6j)_{i=1:d}$ for $j \in \{1, \ldots, 5\}$. Results attesting the accuracy of our method are recorded in Table \ref{table_comparison_price_basket}, where \textit{MC+CV} stands for the price given by the Monte Carlo procedure improved with the use of control variate as explained in \eqref{estimator_with_cv_basket_call}.

\newpage

\begin{table}[ht]
\centering
\begin{tblr}{
    hlines,
    vlines,
    colspec={|cc|cc|cc|},
    cell{1}{1} = {r=2}{c},
    cell{1}{3,5} = {c=2}{c},
}
     & & $\rho = 0$ & & $\rho = 0.4$ & \\
     & $j$ & $2-SL2SE$ & $MC+CV$ & $2-SL2SE$ & $MC+CV$\\
     \hline
    \SetCell[r=5]{} $d=2$ &1 & 12.238  & 12.236 ([12.234, 12.237])& 12.545  & 12.545 ([12.544, 12.546])\\
    &2 & 17.955  & 17.953 ([17.952, 17.955])& 18.100 & 18.097 ([18.096, 18.098])\\
    &3 & 23.877  & 23.882 ([23.880, 23.884])& 23.938 & 23.941 ([23.940, 23.942])\\
    & 4& 29.863  & 29.867 ([29.866, 29.869])& 29.890 & 29.894 ([29.892, 29.895])\\
    &5 & 35.868 & 35.865 ([35.863, 35.867])& 35.884  & 35.881 ([35.880, 35.883])\\
    \SetCell[r=5]{} $d=5$ &1 & 10.455 & 10.456 ([10.455, 10.457])& 11.067 & 11.069 ([11.069, 11.070])\\
    &2 & 16.371 & 16.370 ([16.369, 16.372])& 16.613 & 16.611 ([16.610, 16.612])\\
    &3 & 22.363 & 22.364 ([22.363, 22.366])& 22.471 & 22.471 ([22.470, 22.472])\\
    &4 & 28.367  & 28.364 ([28.362, 28.365])& 28.436 & 28.438 ([28.436, 28.439])\\
    &5 & 34.373 & 34.364 ([34.362, 34.365])& 34.435 & 34.433 ([34.431, 34.434])\\
    \SetCell[r=5]{} $d=20$ &1 & 3.295 & 3.285 ([3.284, 3.287])& 6.020 & 6.062 ([6.060, 6.064])\\
    &2 & 8.874 & 8.872 ([8.871, 8.874])& 10.467& 10.460 ([10.459, 10.462])\\
    &3 & 14.863 & 14.863 ([14.862, 14.864])& 15.712 & 15.695 ([15.693, 15.696])\\
    &4 & 20.867 & 20.863 ([20.862, 20.864])& 21.381 & 21.386 ([21.384, 21.388])\\
    &5 & 26.872 & 26.863 ([26.861, 26.864])& 27.234  & 27.282 ([27.281, 27.284])\\
\end{tblr}
\caption{\textit{Prices of basket call options with our Convex Network. The parameter $\mu$ allows to generate initial prices by using $s_0^i = 85-i + 6j$ for $j \in \{1, \ldots, 5\}$. The training phase of our Convex Network is performed with 1500 iterations.}}
\label{table_comparison_price_basket}
\end{table}

Table \ref{table_comparison_price_basket} showcases the ability of our \emph{Convex Network} to retrieve the basket option prices. This also highlights one of the advantage of our method: even for high dimension (for example for $d=20$ in Table \ref{table_comparison_price_basket}), a small input size is sufficient. This is due to the fact that, our \emph{Convex Network} integrates, by construction, the a priori convexity property of the price of the basket option. Thus filling this convex shape, only few points are sufficient to recover the whole curve.

\subsection{Bermudan option}
The second example is a Bermudan call option on the multidimensional Black \& Scholes model. In the latter model, we consider two different settings which are those considered for the \emph{Deep Optimal Stopping (DOS)} method (see \cite{JMLR:v20:18-232}):

\vspace{0.2cm}
\noindent
\textbf{Symmetric case.}
\noindent
$s_0^i = s_0, \sigma_i = \sigma, \delta_i = \delta$ for all $i \in \{1, \ldots, d\}$ and $\rho_{i, j} = \mathbf{1}_{i = j}$.

\vspace{0.2cm}
\noindent
\textbf{Asymmetric case.}
\noindent
$s_0^i = s_0, \sigma_i = \sigma, \delta_i = \delta$ for all $i \in \{1, \ldots, d\}$ and $\rho_{i, j} = \mathbf{1}_{i = j}$, but unlike the symmetric case, we consider different volatilities given by $\sigma_i = 0.08 + 0.32(i-1)/(d-1)$ for $i \in \{1, \ldots,d\}$ when $d \le 5$. For $d > 5$, we set $\sigma_i = 0.1 + i/(2d)$ for $i \in \{1, \ldots,d\}$.

\vspace{0.2cm}
The time grid is given by $t_k = \frac{kT}{N}$ for $k \in \{0,\ldots,N\}$, where we set $T = 3, N = 9$. The strike price is set to $K = 100$. For the training phase, we use 8 batches of size 1024, and 5000 iterations. The learning rate is set to $1e^{-4}$. We also $n=64$ units per layers, and set the constant $c$ (in Remark \eqref{choix_lambda}) to 40. For the test phase, we use a dataset of size $3\cdot10^6$. Results are recorded in Table \ref{table_bermuda_comp}.

\newpage

\begin{table}[ht!]
    \centering
\begin{tblr}{hlines,vlines,colspec={|c|c|ccc|ccc|}, cell{1}{3,6} = {c=3}{c}}
\hline
        & & Symmetric & & & Asymmetric & & \\
     $d$ & $s_0$ & $2-SL2SE$ & $DOS_L$ & $DOS_U$& $2-SL2SE$ & $DOS_L$ & $DOS_U$ \\
     \hline
     $2$ & 90  & 8.04 & 8.072 & 8.075 & 14.28  & 14.325 & 14.352\\
     $2$ & 100  & 13.84 & 13.895 & 13.903& 19.67 & 19.802 & 19.813\\
     $2$ & 110  & 21.27 & 21.353 & 21.346 & 27.07 & 27.170 & 27.147\\
     $3$ & 90  & 11.22 & 11.290 & 11.283& 19.03 & 19.093 & 19.089\\
     $3$ & 100  & 18.63 & 18.690 & 18.691& 26.61 & 26.680 & 26.684\\
     $3$ & 110  & 27.51 & 27.564 & 27.581& 35.79& 35.842 & 35.817\\
     $5$ & 90 & 16.56 & 16.648 & 16.640& 27.56& 27.662 & 27.662\\
     $5$ & 100  & 26.07 & 26.156 & 26.162& 37.94& 37.976 & 37.995\\
     $5$ & 110  & 36.69 & 36.766 & 36.777& 49.43& 49.485 & 49.513\\
     $10$ & 90  & 26.15 & 26.208 & 26.272& 85.81& 85.937 & 86.037\\
     $10$ & 100  & 38.25 & 38.321 & 38.353& 104.287& 104.692 & 104.791\\
     $10$ & 110  & 50.76 & 50.857 & 50.914& 123.36& 123.668 & 123.823\\
     $20$ & 90  & 37.70 & 37.701 & 37.903& 125.487 & 125.916 & 126.275\\
     $20$ & 100  & 51.52 & 51.571 & 51.765& 149.179& 149.587 & 149.970\\
     $20$ & 110  & 65.50 & 65.494 & 65.762& 172.76& 173.262 & 173.809\\
     $30$ & 90  & 44.73 & 44.797 & 45.110& 153.964 & 154.486 & 154.913\\
     $30$ & 100  & 59.48 & 59.498 & 59.820& 180.745& 181.275 & 181.898\\
     $30$ & 110 & 74.20 & 74.221 & 74.515& 207.592& 208.223 & 208.891\\
\end{tblr}
\caption{\textit{Bermudan option pricing: comparison of our Convex Network with the Deep Optimal stopping method. $DOS_L$ and $DOS_U$ stand for the lower bound and the upper bound given by the latter method. For the symmetric and the asymmetric cases, we set $r=5\%, \sigma=20\%, \delta = 10\%$.}}
\label{table_bermuda_comp}
\end{table}

Table \ref{table_bermuda_comp} proves the effectiveness of our \emph{Convex Network} which gives prices that are very close (relative errors are less than $1\%$) to those given by the \emph{Deep Optimal Stopping} method (see \cite{JMLR:v20:18-232}). Our method works well disregarding the dimension. Besides, it worth noting that our method gives, \q{for free}, the \emph{Greeks} of the Bermudan option. Indeed, by construction, the weights $w^{*}$ of the \emph{Convex Network} which realizes the maximum in \eqref{max_affine_functions} (\emph{activated hyperplanes}) should converge towards the subgradient of the convex function of interest (in this case, the price of the Bermudan option). Thus, by performing the pricing with our \emph{Convex Network}, one can easily retrieve the sensitivity of the Bermudan option price with respect to the underlying asset price (often called \emph{Delta}) by just looking at the weight $w^{*}$ obtained at the end of the training phase.

\subsection{Take-or-Pay option}
For the last example, we consider a Swing or Take-or-Pay contract with $N = 31$ exercise dates. At each exercise date, the holder of the contract is allowed to buy, at price $K = 20$, a volume of gas which is subject to constraints ruled by integers: $\underline{q}=0, \overline{q}=1$ (one can always reduce to this setting, see \cite{Bardou2009OptimalQF}).

For the diffusion model, we consider the following dynamics:
\begin{equation}
    \frac{dF_{t, T}}{F_{t, T}} = \sigma e^{-\alpha (T-t)} dW_t, \quad t \le T,
\end{equation}
where $(W_t, t \ge 0)$ is a one-dimensional standard Brownian motion. Here, we set $\alpha = 4, \sigma = 0.7$, and $F_{0, t} = 20$ for all $t \ge 0$. We assume that the underlying gas price at each exercise date $t_k$ for $k \in \{0,\ldots, N\}$ is modeled by:
\begin{equation}
    F_{t_k} := F_{t_k, t_k} = F_{0, t_k} e^{\sigma X_{t_k} - \frac{1}{2}\lambda_k^2}, \quad X_{t_k} = \int_{0}^{t_k} e^{-\alpha (t_k - s)} \,dW_s, \quad \lambda_k^2 = \frac{\sigma^2}{2\alpha}\big(1 - e^{-2 \alpha t_k} \big).
\end{equation}

For the training phase, we use 5 batches of size $4096$, and 2000 iterations. The learning rate is set to $1e^{-3}$. Each hidden layer of the \emph{Convex Network} has $n = 32$ units, and we set $c = 20$ for the scaling parameter (see Remark \ref{choix_lambda}). For the test phase, the price of the Swing option is computed using $2\cdot10^6$ simulations of path $(F_{t_k})_{0 \le k \le N}$ along which the \emph{Convex Network}, already trained, is used to estimate the optimal control, and then the price of the Swing option. Results are recorded in Table \ref{tab_results_swing}.

\begin{table}[ht!]
    \centering
\begin{tblr}{hlines,colspec={|c|c|c|c|}}
\hline
    $\underline{Q}$ & $\overline{Q}$ & 2-SL2SE & Benchmark\\
     \hline
      20 & 25 & 8.35 & 8.36\\
     20 & 30 & 13.98 & 14.01\\
    20 & 22 & 4.51 & 4.50\\
\end{tblr}
\caption{\textit{Swing pricing results. The benchmark refers to the deep neural network based method named NN-Strat developed in \cite{lemaire2023swing}.}}
\label{tab_results_swing}
\end{table}
Table \ref{tab_results_swing} showcases the efficiency of our \emph{Convex Network} in solving a Stochastic Optimal Control Problem with a convexity feature, namely the pricing of Swing or Take-or-Pay options.

\bibliographystyle{alpha}
\bibliography{biblio.bib}

\newcommand{\etalchar}[1]{$^{#1}$}
\begin{thebibliography}{BEBD{\etalchar{+}}06}

\bibitem[ACGH19]{arora2018a}
Sanjeev Arora, Nadav Cohen, Noah Golowich, and Wei Hu.
\newblock A convergence analysis of gradient descent for deep linear neural networks.
\newblock In {\em International Conference on Learning Representations}, 2019.

\bibitem[AMG24]{JMLR:v25:23-0493}
El~Mehdi Achour, Fran{\c{c}}ois Malgouyres, and S{{\'e}}bastien Gerchinovitz.
\newblock The loss landscape of deep linear neural networks: a second-order analysis.
\newblock {\em Journal of Machine Learning Research}, 25(242):1--76, 2024.

\bibitem[AXK17]{pmlr-v70-amos17b}
Brandon Amos, Lei Xu, and J.~Zico Kolter.
\newblock Input convex neural networks.
\newblock In Doina Precup and Yee~Whye Teh, editors, {\em Proceedings of the 34th International Conference on Machine Learning}, volume~70 of {\em Proceedings of Machine Learning Research}, pages 146--155. PMLR, 06--11 Aug 2017.

\bibitem[BBP09]{Bardou2009OptimalQF}
Olivier Bardou, Sandrine Bouthemy, and Gilles Pag{\`e}s.
\newblock Optimal quantization for the pricing of swing options.
\newblock {\em Applied Mathematical Finance}, 16:183 -- 217, 2009.

\bibitem[BBP10]{Bardou2007WhenAS}
Olivier Bardou, Sandrine Bouthemy, and Gilles Pagès.
\newblock When are swing options bang-bang?
\newblock {\em International Journal of Theoretical and Applied Finance}, 13(06):867–899, September 2010.

\bibitem[BCJ19]{JMLR:v20:18-232}
Sebastian Becker, Patrick Cheridito, and Arnulf Jentzen.
\newblock Deep optimal stopping.
\newblock {\em Journal of Machine Learning Research}, 20(74):1--25, 2019.

\bibitem[BEBD{\etalchar{+}}06]{BarreraEsteve2006NumericalMF}
Christophe Barrera-Esteve, Florent Bergeret, Charles Dossal, Emmanuel Gobet, Asma Meziou, R{\'e}mi Munos, and Damien Reboul-Salze.
\newblock Numerical methods for the pricing of swing options: A stochastic control approach.
\newblock {\em Methodology and Computing in Applied Probability}, 8:517--540, 2006.

\bibitem[BGS15]{pmlr-v38-balazs15}
Gabor Balazs, András György, and Csaba Szepesvari.
\newblock {Near-optimal max-affine estimators for convex regression}.
\newblock In Guy Lebanon and S.~V.~N. Vishwanathan, editors, {\em Proceedings of the Eighteenth International Conference on Artificial Intelligence and Statistics}, volume~38 of {\em Proceedings of Machine Learning Research}, pages 56--64, San Diego, California, USA, 09--12 May 2015. PMLR.

\bibitem[Bou24]{thesisGuillaume}
Guillaume Boutoille.
\newblock {\em Quantification optimale par divergence de Bregman et applications industrielles}.
\newblock Ph{D} thesis, Sorbonne Université, 2024.
\newblock In progress.

\bibitem[BR06]{Bergenth}
Jan Bergenthum and Ludger Rüschendorf.
\newblock Comparison of option prices in semimartingale models.
\newblock {\em Finance and Stochastics}, 10:222--249, 04 2006.

\bibitem[BR08]{BergenthumRüschendorf+2008+53+72}
Jan Bergenthum and Ludger Rüschendorf.
\newblock Comparison results for path-dependent options.
\newblock {\em Statistics \& Decisions}, 26(1):53--72, 2008.

\bibitem[BRTW21]{BahBubacarrLinearNetRiemman}
Bubacarr Bah, Holger Rauhut, Ulrich Terstiege, and Michael Westdickenberg.
\newblock Learning deep linear neural networks: Riemannian gradient flows and convergence to global minimizers.
\newblock {\em Information and Inference: A Journal of the IMA}, 11, 02 2021.

\bibitem[BS07]{Bertsekas_stoch_control}
Dimitri~P. Bertsekas and Steven~E. Shreve.
\newblock {\em Stochastic Optimal Control: The Discrete-Time Case}.
\newblock Athena Scientific, 2007.

\bibitem[DH13]{DINGEC2013421}
Kemal~Dinçer Dingeç and Wolfgang Hörmann.
\newblock Control variates and conditional monte carlo for basket and asian options.
\newblock {\em Insurance: Mathematics and Economics}, 52(3):421--434, 2013.

\bibitem[DH19]{pmlr-v97-du19a}
Simon Du and Wei Hu.
\newblock Width provably matters in optimization for deep linear neural networks.
\newblock In Kamalika Chaudhuri and Ruslan Salakhutdinov, editors, {\em Proceedings of the 36th International Conference on Machine Learning}, volume~97 of {\em Proceedings of Machine Learning Research}, pages 1655--1664. PMLR, 09--15 Jun 2019.

\bibitem[FR75]{Fleming1975}
Wendell Fleming and Raymond Rishel.
\newblock {\em Deterministic and Stochastic Optimal Control}.
\newblock Springer New York, New York, NY, 1975.

\bibitem[GB10]{pmlr-v9-glorot10a}
Xavier Glorot and Yoshua Bengio.
\newblock Understanding the difficulty of training deep feedforward neural networks.
\newblock In Yee~Whye Teh and Mike Titterington, editors, {\em Proceedings of the Thirteenth International Conference on Artificial Intelligence and Statistics}, volume~9 of {\em Proceedings of Machine Learning Research}, pages 249--256, Chia Laguna Resort, Sardinia, Italy, 13--15 May 2010. PMLR.

\bibitem[GL00]{LuschgyQuantif}
Siegfried Graf and Harald Luschgy.
\newblock {\em Foundations of Quantization for Probability Distributions}.
\newblock Springer-Verlag, Berlin, Heidelberg, 2000.

\bibitem[GLM18]{ge2018learning}
Rong Ge, Jason~D. Lee, and Tengyu Ma.
\newblock Learning one-hidden-layer neural networks with landscape design.
\newblock In {\em International Conference on Learning Representations}, 2018.

\bibitem[JGN{\etalchar{+}}17]{Jin2017HowTE}
Chi Jin, Rong Ge, Praneeth Netrapalli, Sham~M. Kakade, and Michael~I. Jordan.
\newblock How to escape saddle points efficiently.
\newblock In {\em International Conference on Machine Learning}, 2017.

\bibitem[JL08]{RePEc:bpj:strimo:v:26:y:2008:i:1:p:53-72:n:5}
Bergenthum Jan and Rüschendorf Ludger.
\newblock {Comparison results for path-dependent options}.
\newblock {\em Statistics \& Risk Modeling}, 26(1):53--72, March 2008.

\bibitem[JP22]{jourdain:hal-02304190}
Benjamin Jourdain and Gilles Pag{\`e}s.
\newblock {Convex order, quantization and monotone approximations of ARCH models}.
\newblock {\em {Journal of Theoretical Probability}}, 35(4):2480--2517, 2022.

\bibitem[Kaw16]{NIPS2016_f2fc9902}
Kenji Kawaguchi.
\newblock Deep learning without poor local minima.
\newblock In D.~Lee, M.~Sugiyama, U.~Luxburg, I.~Guyon, and R.~Garnett, editors, {\em Advances in Neural Information Processing Systems}, volume~29. Curran Associates, Inc., 2016.

\bibitem[LB16]{NIPS2016_c4851e8e}
Chaoyue Liu and Mikhail Belkin.
\newblock Clustering with bregman divergences: an asymptotic analysis.
\newblock In D.~Lee, M.~Sugiyama, U.~Luxburg, I.~Guyon, and R.~Garnett, editors, {\em Advances in Neural Information Processing Systems}, volume~29. Curran Associates, Inc., 2016.

\bibitem[LL21]{MR4308650}
Bernard Lapeyre and J\'er\^ome Lelong.
\newblock Neural network regression for {B}ermudan option pricing.
\newblock {\em Monte Carlo Methods Appl.}, 27(3):227--247, 2021.

\bibitem[LP08]{LuschgyPagèsQuantif}
Harald Luschgy and Gilles Pagès.
\newblock Functional quantization rate and mean regularity of processes with an application to lévy processes.
\newblock {\em The Annals of Applied Probability}, 18(2):427--469, 2008.

\bibitem[LP23]{bookGillesQuantifMarginale}
Harald Luschgy and Gilles Pagès.
\newblock {\em Marginal and Functional Quantization of Stochastic Processes}.
\newblock Probability Theory and Stochastic Modelling. Springer Cham, 2023.

\bibitem[LPY24]{lemaire2023swing}
Vincent Lemaire, Gilles Pagès, and Christian Yeo.
\newblock Swing contract pricing: With and without neural networks.
\newblock {\em Frontiers of Mathematical Finance}, 3(2):270--303, 2024.

\bibitem[LS01]{Longstaff2001ValuingAO}
Francis Longstaff and Eduardo Schwartz.
\newblock Valuing american options by simulation: A simple least-squares approach.
\newblock {\em Review of Financial Studies}, 14:113--47, 02 2001.

\bibitem[Pag16]{Pagès2016_cvx_ord_path_dep}
Gilles Pag{\`e}s.
\newblock {\em Convex Order for Path-Dependent Derivatives: A Dynamic Programming Approach}, pages 33--96.
\newblock Springer International Publishing, Cham, 2016.

\bibitem[Pag18a]{Pagès2018Quantif}
Gilles Pag{\`e}s.
\newblock {\em Numerical Probability: An Introduction with Applications to Finance}, chapter Optimal Quantization Methods I: Cubatures, pages 133--173.
\newblock Springer International Publishing, 2018.

\bibitem[Pag18b]{Pagès2018}
Gilles Pag{\`e}s.
\newblock {\em Numerical Probability: An Introduction with Applications to Finance}, chapter Variance Reduction, pages 54--56.
\newblock Springer International Publishing, 2018.

\bibitem[PS18]{PAGES2018847}
Gilles Pagès and Abass Sagna.
\newblock Improved error bounds for quantization based numerical schemes for bsde and nonlinear filtering.
\newblock {\em Stochastic Processes and their Applications}, 128(3):847--883, 2018.

\bibitem[PY24]{pagès2024convexorderingstochasticcontrol}
Gilles Pagès and Christian Yeo.
\newblock Convex ordering for stochastic control: the swing contracts case.
\newblock {\em ArXiv preprint}, https://arxiv.org/abs/2406.07464, 2024.

\bibitem[Tho95]{Thompson1995ValuationOP}
Andrew~Carl Thompson.
\newblock Valuation of path-dependent contingent claims with multiple exercise decisions over time: The case of take-or-pay.
\newblock {\em Journal of Financial and Quantitative Analysis}, 30:271 -- 293, 1995.

\bibitem[VSP{\etalchar{+}}17]{NIPS2017_3f5ee243}
Ashish Vaswani, Noam Shazeer, Niki Parmar, Jakob Uszkoreit, Llion Jones, Aidan~N Gomez, \L~ukasz Kaiser, and Illia Polosukhin.
\newblock Attention is all you need.
\newblock In I.~Guyon, U.~Von Luxburg, S.~Bengio, H.~Wallach, R.~Fergus, S.~Vishwanathan, and R.~Garnett, editors, {\em Advances in Neural Information Processing Systems}, volume~30. Curran Associates, Inc., 2017.

\bibitem[War24]{MR4791896}
Xavier Warin.
\newblock The {G}roup{M}ax neural network approximation of convex functions.
\newblock {\em IEEE Transactions on Neural Networks and Learning Systems}, 35(8):11608--11612, 2024.

\bibitem[Yeo24]{yeo2024deepmultitaskneuralnetworks}
Christian Yeo.
\newblock Deep multitask neural networks for solving some stochastic optimal control problems.
\newblock In {\em 2024 International Joint Conference on Neural Networks (IJCNN)}, pages 1--9, 2024.

\end{thebibliography}

\appendix

\section{Some useful results}

\begin{theorem}[Zador’s Theorem (see Theorem 5.2 in \cite{Pagès2018Quantif})]
    \label{Zador_thm}
    Consider a probability space $\big(\Omega, \mathcal{A}, \mathbb{P}\big)$. Let $\alpha_N := (x_1,\ldots,x_N) \in \big(\mathbb{R}^d)^N$. Then, one has:
    \begin{enumerate}[label=\roman*.]
\item (\emph{Sharp rate} \cite{LuschgyQuantif}). Let $X \in \mathbb{L}_{\mathbb{R}^d}^{p+\delta}(\mathbb{P})$ for some $\delta > 0$. Let $\mathbb{P}_X(d\xi) = \varphi(\xi)\lambda_d(d\xi) + \nu(d\xi)$, where $\nu \,\bot\ \lambda_d$, i.e. is singular with respect to the Lebesgue measure $\lambda_d$ on $\mathbb{R}^d$. Then, there is a positive constant $C_{p, d}$ such that:
$$\lim\limits_{N \rightarrow +\infty} \hspace{0.1cm} N^{\frac{1}{d}} \cdot \underset{x \in (\mathbb{R}^d)^N}{\min} \hspace{0.1cm} \big\|X-\hat{X}^{\alpha_N }  \big\|_p = C_{p, d}\Bigg[\int_{\mathbb{R}^d}^{} \varphi^{\frac{d}{d+p}} \, \mathrm{d}\lambda_d \Bigg]^{\frac{1}{p} + \frac{1}{d}}.$$

\item (\emph{Non-asymptotic upper-bound} \cite{LuschgyPagèsQuantif}). Let $\delta > 0$. There exists a positive constant $C_{d, p, \delta}$ such that, for every $\mathbb{R}^d$-valued random vector $X$,
$$\forall N \ge 1, \quad \underset{x \in (\mathbb{R}^d)^N}{\min} \hspace{0.1cm} \big\|X-\hat{X}^{\alpha_N}  \big\|_p  \le C_{d, p, \delta} \cdot \sigma_{p+\delta}(X) N^{-\frac{1}{d}},$$
where, for $r > 0$, $\sigma_r(X) = \underset{a \in \mathbb{R}^d}{\min} \hspace{0.1cm} \big\|X - a\big\|_r \le + \infty$.
\end{enumerate}
\end{theorem}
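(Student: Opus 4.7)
The plan is to treat the non-asymptotic upper bound (ii) and the sharp asymptotic rate (i) separately, since they rest on different techniques even though both trace back to Zador. Throughout, I would work with Voronoi partitions and the identity $\|X-\hat X^{\alpha_N}\|_p^p = \mathbb{E}[\min_i |X-x_i|^p]$, and invoke translation invariance to reduce to $a = 0$ in the definition of $\sigma_{p+\delta}$.

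For (ii) the strategy is the classical Pierce / Luschgy-Pagès construction: exhibit an \emph{explicit} (suboptimal) $N$-quantizer with the right decay. Given $N$ and a radius $R>0$ to be tuned, tile the $\ell^\infty$-ball $B_R$ with $\lfloor c\,N\rfloor$ cubes of side $\sim R N^{-1/d}$ and add one auxiliary point far away to handle the complement. On $B_R$ the local contribution is bounded by the half-diagonal of each cube, giving an $\mathbb{L}^p$-contribution of order $R\, N^{-1/d}$. On $B_R^c$ the Markov--H\"older inequality gives
\[
\mathbb{E}\bigl[|X|^p \mathbf{1}_{|X|>R}\bigr] \le R^{-\delta}\,\|X\|_{p+\delta}^{p+\delta}.
\]
Optimizing $R$ in the resulting two-term bound $R\,N^{-1/d} + R^{-\delta/p}\|X\|_{p+\delta}^{(p+\delta)/p}$ yields $R \asymp \|X\|_{p+\delta}$ (up to a dimensional factor) and the announced rate $C_{d,p,\delta}\,\sigma_{p+\delta}(X)\,N^{-1/d}$.

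For (i) I would run the standard Graf--Luschgy two-sided argument. Upper bound: for any probability density $\rho$ on $\mathbb{R}^d$, pick $N$ nodes whose empirical distribution approximates $\rho$, so that a generic Voronoi cell around a node at $x$ has diameter of order $(N\rho(x))^{-1/d}$. A local Riemann-sum computation on each cell gives an asymptotic error
\[
N^{1/d}\,\|X-\hat X^{\alpha_N}\|_p \;\longrightarrow\; C_{p,d}\,\Big(\!\int \varphi(x)\,\rho(x)^{-p/d}\,\lambda_d(dx)\Big)^{1/p},
\]
where $C_{p,d}$ is the quantization coefficient of the uniform distribution on $[0,1]^d$. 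A H\"older-type optimization then shows the minimum over $\rho$ is attained at $\rho \propto \varphi^{d/(d+p)}$, producing the constant $C_{p,d}\bigl[\int \varphi^{d/(d+p)} d\lambda_d\bigr]^{1/p+1/d}$. The matching lower bound proceeds by first discarding the singular part $\nu$ (which asymptotically contributes nothing since it lives on a $\lambda_d$-null set) and then, for an arbitrary sequence of optimal quantizers $\alpha_N$, applying a Fatou-plus-Jensen argument on each Voronoi cell to show that the local error density must exceed the same integral, regardless of how the $x_i$ are placed.

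The main obstacle is the sharp constant $C_{p,d}$ itself. Its existence and finiteness are usually obtained via a subadditivity / self-similarity argument on the uniform distribution over $[0,1]^d$: if $N = k^d m^d$ one partitions the unit cube into $k^d$ sub-cubes and allocates $m^d$ points to each, which yields an almost-subadditive sequence $N^{1/d}e_{p,N}(\mathcal{U}([0,1]^d))$ whose limit is precisely $C_{p,d}$. Tying this universal constant to the general density $\varphi$ requires a careful localization argument (approximating $\varphi$ by step functions on a grid and passing the inequality to the limit), and it is this step — together with controlling the lower bound uniformly over admissible sequences of quantizers — that absorbs the bulk of the technical effort.
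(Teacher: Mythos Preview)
The paper does not prove this theorem. Zador's theorem is stated in the appendix purely as a reference result, with attribution to \cite{Pagès2018Quantif}, \cite{LuschgyQuantif} and \cite{LuschgyPagèsQuantif}, and no proof is given; it is only invoked as a black box in the proof of Theorem~\ref{thm_error_approx_icnn_sur_L2}. So there is no ``paper's own proof'' to compare your proposal against.

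That said, your sketch follows the classical route and is broadly faithful to the references the paper cites: the Pierce-type explicit quantizer for the non-asymptotic bound~(ii), and the Graf--Luschgy localisation/subadditivity argument for the sharp constant in~(i). Two small cautions if you actually intend to flesh this out. First, in~(ii) the ``one auxiliary point far away'' does not literally control the tail contribution; the standard argument uses a \emph{sequence} of geometrically growing shells with exponentially many fewer points in each outer shell (a dyadic allocation), not a single point. Second, in~(i) the lower bound is more delicate than a single Fatou--Jensen step: one needs the ``firewall'' or mass-concentration lemma to prevent optimal quantizers from allocating too many points to regions of small $\varphi$, and the passage from step-function approximations of $\varphi$ back to general $\varphi$ requires uniform control of the quantization error over the approximating sequence. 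These are exactly the technical points you flag as absorbing ``the bulk of the effort,'' so your self-assessment is accurate.
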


\begin{Proposition}[see \cite{jourdain:hal-02304190}]
\label{gen_radial_distri}
Let $Z \in \mathbb{R}^q$ be a $q$-dimensional random vector having a radial distribution in the sense that
$$\forall \quad O \in \mathcal{O}\big(q, \mathbb{R}\big), \quad OZ \sim Z.$$

\noindent
Let $A, B \in \mathbb{M}_{d, q}(\mathbb{R})$. Then, we have the following equivalence
\begin{equation*}
BB^\top - AA^\top \in \mathcal{S}^{+}\big(d, \mathbb{R}\big) \iff AZ \preceq_{cvx} BZ.
\end{equation*}
\end{Proposition}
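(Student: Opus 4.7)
The plan is to treat the two implications separately. The easy implication ``$\Leftarrow$'' is a test against quadratic forms, whereas the hard implication ``$\Rightarrow$'' relies on producing an explicit martingale coupling via Douglas' factorization lemma and an averaging over the orthogonal group.

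For the implication ``$\Leftarrow$'', I would first exploit radiality to show that the covariance of $Z$ is a scalar multiple of the identity: the relation $\mathbb{E}[ZZ^\top]=O\,\mathbb{E}[ZZ^\top]\,O^\top$ for every $O\in\mathcal{O}(q,\mathbb{R})$ forces $\mathbb{E}[ZZ^\top]=c\,I_q$ with $c\ge 0$. Then testing $AZ\preceq_{cvx}BZ$ against the convex functions $x\mapsto\langle \xi,x\rangle^2$ for $\xi\in\mathbb{R}^d$ yields $c\,\xi^\top AA^\top\xi\le c\,\xi^\top BB^\top\xi$, which gives $BB^\top-AA^\top\in\mathcal{S}^+(d,\mathbb{R})$ in the non-degenerate case $c>0$ (the case $c=0$ is trivial since $Z=0$ a.s.).

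For the main direction ``$\Rightarrow$'', the strategy is to build a random orthogonal matrix $\Theta\in\mathcal{O}(q,\mathbb{R})$, independent of $Z$, with two properties: (i) $B\Theta Z \sim BZ$, and (ii) $\mathbb{E}[B\Theta Z\mid Z]=AZ$. Property (i) is automatic: for each fixed realization $\Theta=O$, radiality gives $OZ\sim Z$, hence $BOZ\sim BZ$, and integrating over $\Theta$ preserves this. Property (ii) reduces, by independence of $\Theta$ and $Z$, to the purely matricial condition $B\,\mathbb{E}[\Theta]=A$. Once both are in force, the conditional Jensen inequality $f(AZ)=f(\mathbb{E}[B\Theta Z\mid Z])\le \mathbb{E}[f(B\Theta Z)\mid Z]$ applied to any convex $f$, followed by taking expectations, delivers $AZ\preceq_{cvx}BZ$.

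The crux of the argument, and the step I expect to be the main obstacle, is therefore the construction of such a $\Theta$. I would assemble it from two classical inputs. First, \emph{Douglas' factorization lemma}: the hypothesis $AA^\top\preceq BB^\top$ is equivalent to the existence of a contraction $M\in\mathbb{M}_{q,q}(\mathbb{R})$, meaning $\|M\|_{\mathrm{op}}\le 1$, such that $A=BM$. Second, an elementary but decisive convex-geometry fact: the operator-norm unit ball of $\mathbb{M}_{q,q}(\mathbb{R})$ coincides with the convex hull of $\mathcal{O}(q,\mathbb{R})$. I would establish the latter via the singular value decomposition $M=U\Sigma V^\top$ with $\Sigma=\diag(\sigma_1,\ldots,\sigma_q)$ and $\sigma_i\in[0,1]$; writing each $\sigma_i$ as a convex combination $\tfrac{1+\sigma_i}{2}(+1)+\tfrac{1-\sigma_i}{2}(-1)$ expresses $\Sigma$ as a convex combination of the $2^q$ diagonal sign matrices $\diag(\epsilon)$, $\epsilon\in\{\pm 1\}^q$, and conjugating by $U$ and $V^\top$ turns each such matrix into an orthogonal one. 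Hence $M$ is a finite convex combination of elements of $\mathcal{O}(q,\mathbb{R})$, so $M=\mathbb{E}[\Theta]$ for a discrete random $\Theta\in\mathcal{O}(q,\mathbb{R})$, which supplies the missing ingredient and closes the argument.
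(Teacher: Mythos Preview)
The paper does not supply its own proof of this proposition; it is simply quoted from \cite{jourdain:hal-02304190} and used as a black box in the proof of Proposition~\ref{useful_properties_cvx_ord}. So there is nothing in the paper to compare your argument against directly.

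That said, your proposal is sound and follows a natural route. The forward implication via Douglas' factorization $A=BM$ with $\|M\|_{\mathrm{op}}\le 1$, followed by writing the contraction $M$ as a convex combination of orthogonal matrices through the SVD, is clean and complete; the martingale coupling $\mathbb{E}[B\Theta Z\mid Z]=AZ$ together with $B\Theta Z\sim BZ$ then delivers $AZ\preceq_{cvx}BZ$ exactly as you describe.

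Two small points on the reverse implication. First, testing against $x\mapsto\langle\xi,x\rangle^2$ tacitly assumes $Z\in L^2$, which is not part of the hypotheses (the paper's definition of $\preceq_{cvx}$ only places $U,V$ in $L^1$). A cleaner choice is the convex test function $x\mapsto|\langle\xi,x\rangle|$: by radiality, $\mathbb{E}|\langle v,Z\rangle|=|v|\,\mathbb{E}|Z_1|$ for every $v\in\mathbb{R}^q$, so $AZ\preceq_{cvx}BZ$ gives $|A^\top\xi|\le|B^\top\xi|$ for all $\xi$, i.e.\ $AA^\top\preceq BB^\top$, using only first moments. Second, the case you call ``trivial'' ($c=0$, i.e.\ $Z=0$ a.s.) is actually a genuine exception: then $AZ\preceq_{cvx}BZ$ holds for \emph{every} pair $(A,B)$ while $BB^\top-AA^\top$ need not be positive semidefinite, so the stated equivalence implicitly assumes $Z$ is nondegenerate, and you should say so rather than fold it into the argument.
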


\begin{Proposition}
\label{useful_properties_cvx_ord}
Let $f : \mathbb{R}^d \to \mathbb{R}$ be a convex function with linear growth. Define the operator $\mathcal{T}$ by:
$$\mathbb{R}^d \times \mathbb{M}_{d, q}\big(\mathbb{R}\big) \ni  (x, A) \mapsto \big(\mathcal{T} f\big)(x, A) := \mathbb{E}f\big(x+AZ\big).$$.
\begin{enumerate}[label=\roman*.]
\item \label{right_O_inv_tf} For all $x \in \mathbb{R}$, $\mathcal{T} f(x, \cdot)$ is right $\mathcal{O}\big(q, \mathbb{R}\big)$-invariant i.e., 
$$\forall O \in \mathcal{O}\big(q, \mathbb{R}\big), \quad \mathcal{T} f(x, AO) = \mathcal{T} f(x, A).$$

\item \label{tf_convex} $\mathcal{T} f(\cdot, \cdot)$ is convex.

\item \label{croiss_mat} Define a pre-order on matrix spaces as follows:
$$A \preceq B \iff BB^\top - AA^\top \in \mathcal{S}^{+}\big(d, \mathbb{R}\big).$$

Then, for all $x \in \mathbb{R}$, $\mathcal{T} f(x, \cdot)$ is non-decreasing with respect to the preceding pre-order on matrices i.e., $A \preceq B \implies \mathcal{T} f(x, A) \le \mathcal{T} f(x, B)$.
\end{enumerate}
\end{Proposition}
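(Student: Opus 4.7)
The plan is to establish the three claims in order, with each one building on elementary observations combined with the preceding Proposition \ref{gen_radial_distri} for the monotonicity part.

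For claim \ref{right_O_inv_tf}, I would simply exploit the radial nature of $Z$. Fix any $O\in\mathcal{O}(q,\mathbb{R})$. Then
$$\mathcal{T}f(x,AO)=\mathbb{E}f\bigl(x+AOZ\bigr).$$
Since $Z$ is radially distributed, $OZ\sim Z$, so the law of $x+AOZ$ coincides with the law of $x+AZ$, yielding $\mathcal{T}f(x,AO)=\mathcal{T}f(x,A)$. This is the shortest of the three.

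For claim \ref{tf_convex}, I would use the convexity of $f$ inside the expectation. Given $(x_1,A_1),(x_2,A_2)\in\mathbb{R}^d\times\mathbb{M}_{d,q}(\mathbb{R})$ and $\lambda\in[0,1]$, observe that
$$\lambda x_1+(1-\lambda)x_2+\bigl(\lambda A_1+(1-\lambda)A_2\bigr)Z=\lambda(x_1+A_1 Z)+(1-\lambda)(x_2+A_2 Z),$$
so applying convexity of $f$ pointwise in $\omega$ and then taking expectation gives
$$\mathcal{T}f\bigl(\lambda x_1+(1-\lambda)x_2,\lambda A_1+(1-\lambda)A_2\bigr)\le \lambda\,\mathcal{T}f(x_1,A_1)+(1-\lambda)\,\mathcal{T}f(x_2,A_2).$$
The linear growth assumption on $f$ (combined with the integrability of $Z$ one implicitly needs, e.g. $Z\in \mathbb{L}^1$) only serves to guarantee that all expectations are finite, so there is no measure-theoretic subtlety.

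For claim \ref{croiss_mat}, I would invoke Proposition \ref{gen_radial_distri} directly. Assume $A\preceq B$, i.e. $BB^\top-AA^\top\in\mathcal{S}^{+}(d,\mathbb{R})$. By Proposition \ref{gen_radial_distri}, this is equivalent to $AZ\preceq_{cvx}BZ$. Fix $x\in\mathbb{R}^d$ and define $g_x:\mathbb{R}^d\to\mathbb{R}$ by $g_x(y)=f(x+y)$. Since $f$ is convex, $g_x$ is convex with linear growth, hence by the definition of convex order,
$$\mathcal{T}f(x,A)=\mathbb{E}g_x(AZ)\le \mathbb{E}g_x(BZ)=\mathcal{T}f(x,B),$$
which gives the required monotonicity. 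The only (mild) obstacle in the whole proof is making sure the convex-order inequality applies to $g_x$: this is immediate because translation preserves both convexity and linear growth, so $g_x$ lies in the test class for $\preceq_{cvx}$. Everything else is bookkeeping with expectations.
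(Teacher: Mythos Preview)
Your proposal is correct and follows essentially the same approach as the paper: for (i) you use $OZ\sim Z$ from the radial assumption, for (ii) you linearize inside the expectation and apply convexity of $f$ pointwise, and for (iii) you invoke Proposition~\ref{gen_radial_distri} to get $AZ\preceq_{cvx}BZ$ and then test against the translated convex function $g_x=f(x+\cdot)$. The paper's proof is identical in structure, just slightly terser (it does not name $g_x$ or comment on integrability).
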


\begin{proof}
\begin{enumerate}[label=\roman*.]
\item For any $O \in \mathcal{O}\big(q, \mathbb{R}\big)$, since $Z$ has a radial distribution, $Z \stackrel{\mathcal{L}}{\sim} OZ$, so that
	$$\mathcal{T} f\big(x, AO\big) = \mathbb{E}f\big(x+AOZ\big) = \mathbb{E}f\big(x+AZ\big) = \mathcal{T} f\big(x, A\big).$$

\item For any $x, y\in \mathbb{R}^d$ and $\lambda \in [0,1]$, the convexity of $f$ yields,
	\begin{align*}
	\mathcal{T} f\big(\lambda (x, A) + (1-\lambda) (y, B)\big) &= \mathbb{E}f\Big(\lambda(x+ AZ) + (1-\lambda)(y + BZ)  \Big)\\
	&\le \lambda \mathbb{E}f\big(x+AZ\big) + (1-\lambda) \mathbb{E} f\big(y+BZ\big)\\
	&= \lambda \mathcal{T} f\big(x, A\big) + (1- \lambda) \mathcal{T} f\big(y, B\big).
	\end{align*}

\item Note that if $A \preceq B$ then Proposition \ref{gen_radial_distri} implies $AZ \preceq_{cvx} BZ$. Thus using the convexity of $f(x + \cdot)$ (owing to the convexity of $f$), one has
$$\mathcal{T} f\big(x, A\big) = \mathbb{E}f\big(x+AZ\big))\le \mathbb{E}f\big(x+BZ\big) \le \mathcal{T} f\big(x, B\big).$$
\end{enumerate}
\end{proof}
\end{document}